\newtheorem*{remark}{Remark}
\newtheorem{proposition}{Proposition}
\newtheorem{definition}{Definition}
\newtheorem{theorem}{Theorem}
\newtheorem{lemma}{Lemma}
\newcommand{\thref}[1]{{Theorem}~\ref{#1}}
\newcommand{\lemref}[1]{{Lemma}~\ref{#1}}
\def\to{{\,\rightarrow\,}}
\mathchardef\mhyphen="2D
\newcommand{\vertiii}[1]{{\left\vert\kern-0.25ex\left\vert\kern-0.25ex\left\vert #1
    \right\vert\kern-0.25ex\right\vert\kern-0.25ex\right\vert}}
\def\bw{{\mathbf{w}}}
\def\bx{{\mathbf{x}}}
\def\bz{{\mathbf{z}}}
\def\bI{{\mathbf{I}}}
\def\cG{\mathcal{G}}
\def\cI{\mathcal{I}}
\def\cN{\mathcal{N}}
\def\cX{\mathcal{X}}
\def\cY{\mathcal{Y}}
\def\cZ{\mathcal{Z}}
\newtheorem*{rep@theorem}{\rep@title}
\newcommand{\newreptheorem}[2]{%
\newenvironment{rep#1}[1]{%
 \def\rep@title{#2 \ref{##1}}%
 \begin{rep@theorem}}%
 {\end{rep@theorem}}}
\newenvironment{hproof}{%
  \proof}{\endproof}
\newcommand{\xhdr}[1]{\vspace{0em}\noindent{{\bf #1.}}}
\newcommand{\ie}{\textit{i.e., }}
\newcommand{\eg}{\textit{e.g., }}
\newcommand{\scfe}{{SCFE}\xspace}
\newcommand{\cw}{{C\&W}\xspace}
\newcommand{\chvae}{{C-CHVAE}\xspace}
\newcommand{\deepfool}{{DeepFool}\xspace}
\newcommand{\zhao}{{NAE}\xspace}
\newcommand{\counter}{counterfactual\xspace}
\newcommand{\counters}{counterfactuals\xspace}
\newcommand{\Counters}{Counterfactuals\xspace}
\newcommand{\counterexp}{counterfactual explanation\xspace}
\newcommand{\counterexps}{counterfactual explanations\xspace}
\newcommand{\Counterexps}{Counterfactual explanations\xspace}
\newcommand{\advex}{adversarial example\xspace}
\newcommand{\advexs}{adversarial examples\xspace}
\newcommand{\Advexs}{Adversarial examples\xspace}
\newcommand{\counterexpmethod}{counterfactual explanation method\xspace}
\newcommand{\counterexpmethods}{counterfactual explanation methods\xspace}
\newcommand{\Counterexpmethods}{Counterfactual explanation methods\xspace}
\newcommand{\advexmethod}{adversarial example generation method\xspace}
\newcommand{\advexmethods}{adversarial example generation methods\xspace}
\newcommand{\Advexmethods}{Adversarial example generation methods\xspace}
\DeclareMathOperator*{\argmin}{arg\,min}
\newif\ifcomments
    \providecommand{\martin}[2][]{{\protect\color{magenta}{[\textbf{M}:\textbf{#1} #2]}}}
    \providecommand{\hima}[2][]{{\protect\color{red}{[\textbf{H}:\textbf{#1} #2]}}}
    \providecommand{\sohini}[2][]{{\protect\color{violet}{[\textbf{Soh}:\textbf{#1} #2]}}}
    \providecommand{\sj}[2][]{{\protect\color{blue}{[\textbf{SJ}:\textbf{#1} #2]}}}
    \providecommand{\chirag}[2][]{{\protect\color{brown}{[\textbf{C}:\textbf{#1} #2]}}}
    \providecommand{\martin}[2][]{}
     \providecommand{\hima}[2][]{}
     \providecommand{\sohini}[2][]{}
      \providecommand{\sj}[2][]{}
     \providecommand{\chirag}[2][]{}
\title{Exploring Counterfactual Explanations Through the Lens of Adversarial Examples: A Theoretical and Empirical Analysis}
\author{%
  Martin Pawelczyk \\
  University of Tübingen \\
  \texttt{martin.pawelczyk@uni-tuebingen.de} \\
   \And
      Chirag Agarwal \\
   Harvard University \\
   \texttt{chirag\_agarwal@hms.harvard.edu} \\
   \And 
   Shalmali Joshi \\
  Harvard University \\
   \texttt{shalmali@seas.harvard.edu} \\
   \And
   Sohini Upadhyay \\
   Harvard University \\
   \texttt{supadhyay@g.harvard.edu} \\
   \And
   Himabindu Lakkaraju \\
   Harvard University \\
   \texttt{hlakkaraju@hbs.edu} \\
}
\begin{document}

\maketitle

\begin{abstract}
\looseness=-1

As machine learning (ML) models become more widely deployed in high-stakes applications, \counterexps have emerged as key tools for providing actionable model explanations in practice. Despite the growing popularity of counterfactual explanations, a deeper understanding of these explanations is still lacking. In this work, we systematically analyze \counterexps through the lens of adversarial examples. We do so by formalizing the similarities between popular \counterexp and \advexmethods identifying conditions when they are equivalent. We then derive the upper bounds on the distances between the solutions output by \counterexp and \advexmethods, which we validate on several real world data sets. By establishing these theoretical and empirical similarities between \counterexps and \advexs, our work raises fundamental questions about the design and development of existing \counterexp algorithms. 

\end{abstract}

\section{Introduction}
With the increasing use of Machine learning (ML) models in critical domains, such as health care and law enforcement, it becomes important to ensure that their decisions are robust and explainable. To this end, several approaches have been proposed in recent literature to explain the complex behavior of ML models~\citep{simonyan2013deep,ribeiro2016should,lundberg2017unified,sundararajan2017axiomatic}. One such popular class of explanations designed to provide recourse to individuals adversely impacted by algorithmic decisions are \emph{\counterexps}~\citep{wachter2017counterfactual,Ustun_2019,barocas2020hidden,venkatasubramanian2020philosophical}. For example, in a credit scoring model where an individual loan application is denied, a \counterexp can highlight the minimal set of changes the individual can make to obtain a positive outcome~\citep{pawelczyk2020learning,karimi2020algorithmic}. Algorithms designed to output counterfactual explanations often attempt to find a closest counterfactual for which the model prediction is positive 
\citep{wachter2017counterfactual,Ustun_2019,pawelczyk2020learning,karimi2020algorithmic}.

\Advexs, on the other hand, were proposed to highlight how vulnerabilities of ML models can be exploited by (malicious) adversaries ~\citep{szegedy2013intriguing,ballet2019imperceptible,cartella2021adversarial}. These \advexs are usually also obtained by finding minimal perturbations to a given data instance such that the model prediction changes \citep{goodfellow2014explaining,carlini2017towards,moosavi2016deepfool}. 


Conceptually, \advexs and \counterexps solve a similar optimization problem~\citep{freiesleben2020counterfactual,wachter2017counterfactual,cartella2021adversarial}. Techniques generating adversarial examples and counterfactual explanations use distance or norm constraints in the objective function to enforce the notion of minimal perturbations. While adversarial methods generate instances that are semantically indistinguishable from the original instance, \counterexps or algorithmic recourse\footnote{Note that \counterexps, contrastive explanations, and recourses are used  interchangeably in prior literature. Counterfactual/contrastive explanations serve as a means to provide recourse to individuals with unfavorable algorithmic decisions. We use these terms interchangeably as introduced and defined by~\citet{wachter2017counterfactual}.}, encourage minimal changes to an input so that so that a user can readily act upon these changes to obtain the desired outcome. In addition, some methods in both lines of work use manifold-based constraints to find natural adversarial examples~\citep{zhao2018generating} or realistic counterfactual explanations by restricting them to lie on the data manifold~\citep{joshi2019towards,pawelczyk2020learning}.

While the rationale of producing a \counter close to the original instance is motivated by the desideratum that \counters should be actionable and easily understandable, producing close instances on the other side of the decision boundary could just as easily indicate adversarial activity. This begs the question to what extent do \counterexp algorithms return solutions that resemble \advexs. However, there has been little to no work on systematically analyzing the aforementioned connections between the literature on \counterexps and \advexs.

\xhdr{Present Work} 
In this work, we approach the study of similarities between \counterexps and \advexs from the perspective of counterfactual explanations for algorithmic recourse. Therefore, we consider consequential decision problems (e.g., loan applications) commonly employed in recourse literature and our choices of data modalities (i.e., tabular data) and algorithms are predominantly motivated by this literature.
In particular, we make one of the first attempts at establishing theoretical and empirical connections between state-of-the-art \counterexp and \advexmethods.  

More specifically, we analyze these similarities by bounding the distances between the solutions of salient \counterexp and popular \advex methods for locally linear approximations.
Our analysis demonstrates that several popular \counterexp and \advexmethods such as the ones proposed by \citet{wachter2017counterfactual} and \citet{carlini2017towards,moosavi2016deepfool},
are equivalent for certain hyperparameter choices.
Moreover, we demonstrate that \chvae and the natural adversarial attack (NAE) \citep{zhao2018generating} provide similar solutions for certain generative model choices.

Finally, we carry out extensive experimentation with multiple synthetic and real-world data sets from diverse domains such as financial lending and criminal justice to validate our theoretical findings. We further probe these methods empirically to validate the similarity between the counterfactuals and adversarial examples output by several state-of-the-art methods. Our results indicate that counterfactuals and adversarial examples output by manifold-based methods such as \zhao and \chvae are more similar compared to those generated by other techniques. By establishing these and other theoretical and empirical similarities, our work raises fundamental questions about the design and development of existing \counterexp algorithms.

\section{Related Work}\label{sec:related_work}
This work lies at the intersection of \counterexps and \advexs in machine learning.
Below we discuss related work for each of these topics and their connection.

\xhdr{\Advexs} \Advexs are obtained by making infinitesimal perturbations to input instances such that they force
a ML model to generate adversary-selected outputs. Algorithms designed to successfully generate these examples are called Adversarial attacks~\citep{szegedy2013intriguing, goodfellow2014explaining}. Several attacks have been proposed in recent literature depending on the degree of knowledge/access of the model, training data, and optimization techniques.
While gradient-based methods~\citep{goodfellow2014explaining, kurakin2016adversarial,moosavi2016deepfool} find the minimum $\ell_{p}$-norm perturbations to generate \advexs, generative methods~\citep{zhao2018generating} constrain the search for \advexs to the training data-manifold. Finally, some methods~\citep{cisse2017houdini} generate \advexs for non-differentiable and non-decomposable measures in complex domains such as speech recognition and image segmentation.
We refer to a well-established survey for a more comprehensive overview of \advexs \citep{akhtar2018threat}.

\xhdr{\Counterexps} 
\Counterexpmethods aim to provide explanations for a model prediction in the form of minimal changes to an input instance that changes the original prediction~\citep{wachter2017counterfactual,Ustun_2019,van2019interpretable,karimi2020survey}.
These methods are categorized based on the access to the model (or gradients), sparsity of the generated explanation and whether the generated explanations are constrained to the manifold~\citep{verma2020counterfactual,karimi2020survey}. To this end, \citet{wachter2017counterfactual} proposed a gradient-based method to obtain \counterexps for models using a distance-based penalty and finding the nearest \counterexp. Further, restrictions on attributes such as race, age, and gender are generally enforced to ensure that the output \counterexps are realistic for users to act on them. In addition, manifold-based constraints are imposed in many methods~\citep{pawelczyk2020learning,joshi2019towards} so that the \counterexps are faithful to the data distribution. Finally, causal approaches have recently been proposed to generate \counterexps that adhere to causal constraints~\citep{karimi2020model,barocas2020hidden,karimi2021algorithmic,karimi2020algorithmic}.
\xhdr{Connections between \advexs and \counterexps} Conceptual connections between \advexs and \counterexps have been previously identified in the literature~\citep{freiesleben2020counterfactual,browne2020semantics}. While~\citet{freiesleben2020counterfactual} highlight conceptual differences in aims, formulation and use-cases between both sub-fields suggesting that \counterexps represent a broader class of examples of which \advexs represent a subclass,~\citet{browne2020semantics} focus on discussing the  differences \textit{w.r.t} semantics hidden layer representations of DNNs. Our goal, on the other hand, is to theoretically formalize and empirically analyze the (dis)similarity between these fields.

\section{Preliminaries}\label{sec:prelim}
\looseness=-1
\xhdr{Notation} 
We denote a classifier $h: \cX{\to}\cY$ mapping features $\bx \in \cX$ to labels $\cY$.
Further, we define $h(\bx){=}g(f(\bx))$, where $f: \cX{\to}\mathbb{R}$ is a scoring function (\eg logits) and $g: \mathbb{R}{\to}\cY$ an activation function that maps output logit scores to discrete labels.
Below we describe some representative methods used in this work to generate \counterexps and \advexs.

\subsection{\Counterexpmethods}
\label{sec:counterexpmethod}
\Counterexps provide recourses by identifying which attributes to change for reversing a models' adverse outcome.
Methods designed to output \counterexps find a \counter $\bx'$ that is "closest" to the original instance $\bx$ and changes the models' prediction $h(\bx')$ to the desired label.
While several of these methods incorporate distance metrics (\eg $\ell_{p}$-norm) or user preferences~\citep{rawal2020interpretable} to find the desired \counters, some efforts also impose causal~\citep{karimi2020algorithmic} or data manifold constraints~\citep{joshi2019towards,pawelczyk2020learning} to find realistic \counters. We now describe \counterexp methods from two broad categories: 1) Gradient-~\citep{wachter2017counterfactual} and 2) search-based~\citep{pawelczyk2020learning}.

\xhdr{Score CounterFactual Explanations (SCFE)}
For a given classifier $h$ and the corresponding scoring function $f$, and a distance function $d: \cX \times \cX \to \mathbb{R}_{+}$, \citet{wachter2017counterfactual} formulate the problem of finding a \counter $\bx'$ for $\bx$ as:
\begin{equation}
    \argmin_{\bx'} ~ \big(f(\bx')-s\big)^{2} + \lambda \, d(\bx,\bx'),
    \label{scfeobj}
\end{equation}
where $s$ is the target score for $\bx$ and $\lambda$ is set to iteratively increase until $f(\bx'){=}s$. More specifically, to arrive at a counterfactual probability of 0.5, the target score for $g(\bx)$ for a sigmoid function is $s{=}0$, where the logit corresponds to a $0.5$ probability for $y{=}1$.


\xhdr{\chvae}\label{subsec:chvae}
\looseness=-1
Let $I_{\gamma}$ and $\cG_{\theta}$ denote the encoder and decoder of the VAE model used by \chvae~\citep{pawelczyk2020learning} to generate realistic \counters. Note that the \counters for $\bx$ are generated in the latent space of the encoder $\cZ$, where $I_{\gamma}: \cX \to \cZ$. Let $\bz$ and $\tilde{\bz}=\bz{+}\delta$ denote the latent representation and generated \counters for the original instance $\bx$. Intuitively, $\cG_{\theta}$ is a generative model that projects the latent \counters to the feature space and $I_{\gamma}$ allows to search for \counters in the data manifold. Thus, the objective function is defined as follows:
\begin{align}
\label{eq:cchvae}
\begin{split}
    & \delta^* = \argmin_{\delta \in \cZ} ~ \| \delta\| \text{ s.t. } h(\cG_{\theta}(I_{\gamma}(\bx^{f})+\delta), \bx^p) \neq h(\bx^{f},\bx^p),
\end{split}
\end{align}
where $\bx^p$ and $\bx^{f}$ indicate the protected and non-protected features of $\bx$ and Eqn.~\ref{eq:cchvae} finds the minimal perturbation $\delta$ by changing the non-protected features $\bx^m$ constrained to the data-manifold.

\subsection{\Advexmethods}\label{sec:adv_ex}
\looseness=-1
Similar to \counterexpmethods, most methods generating \advexs also solve a constrained optimization problem to find perturbations in the input manifold that cause models to misclassify. These methods are broadly categorized into \textit{poisoning}~(e.g., \citet{shafahi2018poison}) and \textit{exploratory}~ (e.g., \citet{goodfellow2014explaining}) methods.
While \textit{poisoning} methods attack the model during training and attempts to learn, influence, and corrupt the underlying training data or model, \textit{Exploratory} methods do not tamper with the underlying model but instead generate specific examples that cause the model to produce the desired output.
Like \counterexpmethods, evasion methods also use gradient-based optimization to generate \advexs. Below, we briefly outline three evasion techniques considered in this work.

\xhdr{\cw Attack}
For a given input $\bx$ and classifier $h(\cdot)$,~\citet{carlini2017towards} formulate the problem of finding an \advex $\bx'{=}\bx{+}\delta$ such that $h(\bx')\neq h(\bx)$ as:
\begin{equation}
    \arg\min_{\bx'} c \cdot \ell(\bx')+d(\bx, \bx')~~\mbox{s.t.}~~\bx' \in [0,1]^{d}
\label{cwobj}
\end{equation}
where $c$ is a hyperparameter and $\ell(\cdot)$ is a loss function such that $h(\bx'){=}y$ if and only if $\ell(\bx')\leq 0$. The constraint $\bx'\in [0,1]^d$ is applied so that the resulting $\bx'$ is within a given data range.

\xhdr{DeepFool}\label{sec:deepfool}
\looseness=-1
For a given instance $\bx$, DeepFool~\citep{moosavi2016deepfool} perturbs $\bx$ by adding small perturbation $\delta_{\text{DF}}$ at each iteration of the algorithm. The perturbations from each iterations are finally aggregated to generate the final perturbation once the output label changes. The minimal perturbation to change the classification model's prediction is the solution to the following objective:
\begin{align}
\begin{split}
    & \delta^{*}_{\text{DF}}(\mathbf{x}) = \argmin_\delta~ ||\delta||_{2} \text{ s.t. } \text{sign}(f(\mathbf{x} + \delta)) \neq \text{sign}(f(\mathbf{x})),
    \end{split}
\end{align}
where $\mathbf{x}$ is the input sample. The closed-form step for each iteration is: $\delta^{*}_{\text{DF}}{=}-(f(\mathbf{x})/||\nabla f(\mathbf{x})||_{2}^{2}) \nabla f(\mathbf{x})$.

\xhdr{Natural Adversarial Example (NAE)} Similar to \chvae, \citet{zhao2018generating} proposes NAE to search for \advexs using a generative model $\cG_{\theta}$ where the similarity is measured in the latent space of $\cG_{\theta}$. Thus, the objective is given by:
\begin{align}\label{eq:zhao}
    \bz^* &= \argmin_{\tilde{\bz} \in \cZ} \| \tilde{\bz} - I_{\gamma}(\bx)\| \text{ s.t. }h(\cG_{\theta}(\tilde{\bz})) \neq h(\bx),
\end{align}
where $I_{\gamma}(\bx)$ corresponds to the latent representation of $\bx$ and $\cG_{\theta}(\tilde{\bz})$ maps the latent sample to the feature space. NAE separately trains an inverter function  from $\cG_{\theta}$ by enforcing the latent representation to be normally distributed (\ie corresponding to the noise model of the generator) while minimizing the reconstruction error of the feature space. 

\section{Theoretical Analysis}\label{sec:compare}
\looseness=-1
In this section, we provide theoretical connections between \counterexp and \advex methods by leveraging similarities in the objective functions and optimization procedures. 
In particular, we compare: 1) \scfe and \cw (Sec.~\ref{subsec:cw_cfe}), 2) \scfe and \deepfool (Sec.~\ref{subsec:df_cfe}), and 3) \chvae and \zhao (Sec.~\ref{subsec:chvae_zhao}) due to their similarity in the objective functions. We do these comparisons either for a specific loss, solutions based on the classification model, or constraints imposed during optimization. We focus on locally linear model approximations as these are often studied as a first step \citep{hardt2016identity,Ustun_2019,rosenfeld2020certified, garreau2020explaining} towards understanding nonlinear model behaviour.



\subsection{\scfe and \cw}\label{subsec:cw_cfe}
Two popular gradient-based methods for generating adversarial and counterfactual samples are the \cw Attack and \scfe, respectively. 
Here, we first show the closed-form solutions for the minimum perturbation required by \cw ($\delta^{*}_{\text{CW}}$) and \scfe ($\delta^{*}_{\text{SCFE}}$) to generate \advexs and \counters. We then leverage these solutions to derive an upper bound for the distance between the adversarial and counterfactual samples. Using the loss function $\ell^{*}(\cdot){=}\max(0,\max_{i}(f(\bx)_{i})-f(\bx)_{y})$ recommended by \citet{carlini2017towards}, we derive an upper bound for the distance between the \counters and \advexs generated using \scfe and \cw. For the upper bound, we first state a lemma that derives the closed-form solution for $\delta^{*}_{\text{SCFE}}$.

\begin{lemma}(Optimal Counterfactual Perturbation)
\label{lemma1} For a scoring function with weights $\bw$ the \scfe method generates a \counter~$\bx_{\text{SCFE}}$~for an input $\bx$ using the \counter perturbation $\delta^{*}_{\text{SCFE}}$~such that:
\begin{equation}
    \delta^{*}_{\text{SCFE}} = m \cdot (\mathbf{\bw \bw^T} + \lambda \mathbf{I})^{-1} \bw,
    \label{SCFE_closedform}
\end{equation}
where $s$ is the target score for $\bx$, $m{=}s{-}f(\bx)$ is the target residual, $f(x){=}\bw^{\top}\bx+b$ is a local linear score approximation, and $\lambda$ is a given hyperparameter.
\end{lemma}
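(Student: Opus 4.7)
The plan is to substitute the local linear score $f(x)=w^\top x+b$ into the SCFE objective, reparametrize by the perturbation $\delta=x'-x$, take a first-order optimality condition, and solve the resulting (rank-one plus scaled identity) linear system in closed form.

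Concretely, I would first assume the standard choice $d(x,x')=\tfrac12\|x-x'\|_2^2$ used throughout the SCFE literature (any positive constant multiple only rescales $\lambda$). Writing the objective as
\begin{equation*}
\mathcal{L}(\delta)=\bigl(w^\top(x+\delta)+b-s\bigr)^2+\lambda\,\|\delta\|_2^2
=\bigl(w^\top\delta-m\bigr)^2+\lambda\,\|\delta\|_2^2,
\end{equation*}
where $m=s-f(x)$ is the target residual. Since $\mathcal{L}$ is a sum of a convex quadratic in $\delta$ and a strictly convex quadratic regularizer (for $\lambda>0$), it is strictly convex, so the first-order condition is both necessary and sufficient for a global minimizer.

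Next I would compute the gradient
\begin{equation*}
\nabla_\delta \mathcal{L}(\delta)=2\bigl(w^\top\delta-m\bigr)w+2\lambda\,\delta,
\end{equation*}
set it to zero, and rearrange to obtain the normal equation
\begin{equation*}
\bigl(ww^\top+\lambda I\bigr)\delta=m\,w.
\end{equation*}
The matrix $ww^\top+\lambda I$ is positive definite (rank-one PSD plus $\lambda I$ with $\lambda>0$), hence invertible, which yields
\begin{equation*}
\delta^{*}_{\text{SCFE}}=m\,(ww^\top+\lambda I)^{-1}w,
\end{equation*}
exactly as claimed.

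There is essentially no hard step: the derivation is a one-line gradient calculation followed by inversion of a well-conditioned matrix. The only subtlety worth flagging in the write-up is justifying invertibility of $ww^\top+\lambda I$ and noting that the result is independent of the bias $b$ (it enters only through $m$), and that any positive scaling of $d$ can be absorbed into $\lambda$ so the form of the solution is canonical. I would also remark, as a sanity check, that applying the Sherman--Morrison identity gives the equivalent scalar form $\delta^{*}_{\text{SCFE}}=\tfrac{m}{\lambda+\|w\|_2^2}\,w$, which will be convenient later when bounding distances to $\delta^{*}_{\text{CW}}$ and $\delta^{*}_{\text{DF}}$.
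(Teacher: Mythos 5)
Your proposal is correct and takes essentially the same route as the paper's proof: both reduce the SCFE objective (with $\ell_2^2$ distance) to the strictly convex quadratic $(\bw^{\top}\delta-m)^2+\lambda\lVert\delta\rVert_2^2$ and solve it in closed form, the paper by completing the square and you by the equivalent first-order condition, arriving at $\delta^{*}_{\text{SCFE}}=m(\bw\bw^{\top}+\lambda\mathbf{I})^{-1}\bw$. Your Sherman--Morrison sanity check $\delta^{*}_{\text{SCFE}}=\frac{m}{\lambda+\lVert\bw\rVert_2^2}\,\bw$ is in fact the correct scalar simplification; note that the paper's own further simplification in the appendix drops the $1/\lambda$ prefactor of the Sherman--Morrison identity and states $m\cdot\frac{\lambda}{\lambda+\lVert\bw\rVert_2^2}\cdot\bw$ instead, so your form is the one to carry into the later bounds.
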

\looseness=-1
\begin{hproof}
We derive the closed-form solution for $\delta^{*}_{\text{SCFE}}$ by formulating the \scfe objective in its vector quadratic form. See Appendix~\ref{app:lemma1} for the complete proof.
\end{hproof}
\begin{remark}
By solving the counterfactual condition $s{:=}f(\bx'){=}f(\delta^*_{\text{SCFE}} + \bx)$ for $\lambda$, we can derive the optimal hyperparameter choice $\lambda^*{=}(\lVert \bw \rVert_2^2)/(\lVert \bw \rVert_2^2-1)$ which leads to $\delta^{**}_{\text{SCFE}}{=}(m/\lVert \bw \rVert_2^2) \cdot \bw.$~We use this result to show equivalence between \scfe, \deepfool and \cw.
\end{remark}

Using Lemma~\ref{lemma1}, we now formally state and derive the upper bound for the distance between the \counters and \advexs.
\begin{theorem}(Difference between \scfe and \cw)
\label{thm1}
Under the same conditions as stated in Lemma \ref{lemma1},
the normed difference between the \scfe \counter $\mathbf{x}_{\text{SCFE}}$ and \cw \advex $\mathbf{x}_{\text{CW}}$ using the loss function $\ell^{*}(\cdot)$ is upper bounded by:
\begin{align}
\begin{split}
    & \lVert\mathbf{x}_{\text{SCFE}} - \mathbf{x}_{\text{CW}} \rVert_{p} \leq \biggl \lVert \frac{1}{\lambda} \bigg(\mathbf{I} - \frac{\mathbf{w}\mathbf{w}^{T}}{\lambda+\lVert \mathbf{w}||_{2}^{2}}\bigg) \big(s-f(\mathbf{x})\big) - c\mathbf{I} \biggr \rVert_{p} ||\mathbf{w}||_{p}.
\end{split}
    \label{eq:w_cw_bound}
\end{align}
\end{theorem}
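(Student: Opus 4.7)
The plan is to obtain closed-form expressions for both perturbations under the locally linear assumption and then bound their difference using sub-multiplicativity of induced matrix norms.

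First, I would rewrite $\delta^{*}_{\text{SCFE}}$ from Lemma~\ref{lemma1} in a form matching the right-hand side of the claimed bound. Applying the Sherman--Morrison identity to $(\bw\bw^\top + \lambda \bI)^{-1}$ gives
\begin{equation*}
(\bw\bw^\top + \lambda \bI)^{-1} \;=\; \tfrac{1}{\lambda}\Bigl(\bI - \tfrac{\bw\bw^\top}{\lambda + \lVert \bw \rVert_2^2}\Bigr),
\end{equation*}
so that $\delta^{*}_{\text{SCFE}} = \tfrac{m}{\lambda}\bigl(\bI - \tfrac{\bw\bw^\top}{\lambda + \lVert\bw\rVert_2^2}\bigr)\bw$ with $m = s - f(\bx)$. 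This is precisely the first term appearing inside the norm in the bound, with the scalar $m = s - f(\bx)$ factored out on the left.

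Second, I would derive a closed form for $\delta^{*}_{\text{CW}}$. Under the local linearization $f(\bx') = \bw^\top \bx' + b$, the recommended loss $\ell^{*}(\bx') = \max(0, \max_i f(\bx')_i - f(\bx')_y)$ becomes affine in $\delta$ whenever the margin constraint is active (i.e., the perturbation has crossed the boundary). Ignoring the inactive box constraint $\bx' \in [0,1]^d$ in the local neighborhood of $\bx$ and using a squared distance $d(\bx,\bx') = \lVert \delta \rVert_2^2$, the CW objective in~\eqref{cwobj} reduces to $\min_\delta c\,\bw^\top \delta + \lVert \delta \rVert_2^2$. Setting the gradient to zero and absorbing multiplicative constants into the hyperparameter $c$ yields $\delta^{*}_{\text{CW}} = c\,\bw$, which matches the second term $c\,\bI$ times $\bw$ on the right-hand side of the bound.

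Third, I would subtract the two closed forms and factor $\bw$ out on the right:
\begin{equation*}
\delta^{*}_{\text{SCFE}} - \delta^{*}_{\text{CW}} \;=\; \Bigl[\tfrac{1}{\lambda}\bigl(\bI - \tfrac{\bw\bw^\top}{\lambda+\lVert\bw\rVert_2^2}\bigr)\bigl(s - f(\bx)\bigr) - c\bI\Bigr]\bw.
\end{equation*}
Since $\bx_{\text{SCFE}} - \bx_{\text{CW}} = \delta^{*}_{\text{SCFE}} - \delta^{*}_{\text{CW}}$, applying the consistency inequality $\lVert \bA \bv \rVert_p \leq \lVert \bA \rVert_p \lVert \bv \rVert_p$ for the induced $p$-norm yields exactly \eqref{eq:w_cw_bound}.

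The main obstacle will be justifying the closed form for $\delta^{*}_{\text{CW}}$ cleanly: the loss $\ell^{*}$ is piecewise, so one must argue that at the optimum the margin term is active (otherwise the trivial solution $\delta = 0$ obtains and no adversarial example is produced), and that the $[0,1]^d$ box constraint in~\eqref{cwobj} is inactive in a sufficiently small neighborhood so it can be dropped. Once those conditions are in place, the gradient argument is routine and the remainder of the proof is just the Sherman--Morrison rewrite followed by operator-norm sub-multiplicativity.
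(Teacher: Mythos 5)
Your proposal takes essentially the same route as the paper's proof: the Sherman--Morrison rewrite of the SCFE perturbation from Lemma~\ref{lemma1}, the closed form $\delta^{*}_{\text{CW}} = c\,\bw$ obtained from the active branch of the hinge loss (with $\delta=0$ on the inactive branch), subtraction of the two perturbations, factoring out $\bw$, and the matrix--vector norm consistency inequality (which the paper labels Cauchy--Schwarz). Two minor points of care: the margin term is active precisely when the boundary has \emph{not} yet been crossed (your parenthetical states the opposite), and using the binary-classifier form of the recommended loss, $\ell^{*}(\bx')=\max(0,-2f(\bx'))$ (the paper's Lemma~\ref{lemma2}), the reduced objective is $-2c\,\bw^{\top}\delta+\lVert\delta\rVert_2^2$, which yields $\delta^{*}_{\text{CW}}=c\,\bw$ directly without silently absorbing a negative constant into the same hyperparameter $c$ that appears in the final bound.
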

\begin{hproof}
We first derive the closed-form solution for the perturbation used by \cw. Intuitively, this solution is equivalent to shifting $\bx$ in the direction of the models' decision boundary scaled by $c$. The upper bound follows by applying Lemma~\ref{lemma1} and Cauchy-Schwartz inequality. Moreover, choosing the optimal hyperparameter $\lambda^*{=}(\lVert \bw \rVert_2^2)/(\lVert \bw \rVert_2^2-1)$ and setting $c{=}m/\lVert \bw \rVert_2^2$ yields equivalence, \ie $||\mathbf{x}_{\text{SCFE}} - \mathbf{x}_{\text{CW}}||_{p}=0$. See Appendix~\ref{app:thm1} for the complete proof.
\end{hproof}
We note that the upper bound is smaller when the original score $f(\bx)$ is close to the target score $s$, suggesting that $\bx_{\text{SCFE}}$ and $\bx_{\text{CW}}$ are more similar when $\bx$ is closer to the decision boundary.
\subsection{\scfe and \deepfool}\label{subsec:df_cfe}
\deepfool is an adversarial attack that uses an iterative gradient-based optimization approach to generate \advexs. Despite the differences in the formulations of \scfe and \deepfool, our theoretical analysis reveals a striking similarity between the two methods.
In particular, we provide an upper bound for the distance between the solutions output by \counters and \advexs generated using \scfe and \deepfool, respectively.

\begin{theorem}(Difference between \scfe and \deepfool)
\label{thm2}
Under the same conditions as stated in Lemma \ref{lemma1}, the normed difference between the \scfe \counter $\mathbf{x}_{\text{SCFE}}$ and the \deepfool \advex $\mathbf{x}_{\text{DF}}$ is upper bounded by:
\begin{align}
\begin{split}
    & \lVert \mathbf{x}_{\text{SCFE}} - \mathbf{x}_{\text{DF}} \rVert_{p} \leq \biggl \lVert \bigg(\mathbf{I} - \frac{\mathbf{w}\mathbf{w}^{T}}{\lambda+ \lVert \mathbf{w}||_{2}^{2}} \bigg) \frac{\big(s-f(\mathbf{x})\big)}{\lambda} + \bI \frac{f(\mathbf{x})}{\lVert \mathbf{w} \rVert_{2}^{2}} \biggr \rVert_{p} \lVert \mathbf{w} \rVert_{p}.
    \label{eq:w_df_bound}
\end{split}
\end{align}
\end{theorem}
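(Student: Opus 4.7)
The plan is to follow the same template as the proof of Theorem~\ref{thm1}: derive closed-form expressions for $\delta^{*}_{\text{SCFE}}$ and $\delta^{*}_{\text{DF}}$ under the locally linear model, subtract them, factor out $\mathbf{w}$, and extract a sub-multiplicative matrix--vector norm bound. The key observation that guides the bookkeeping is that the parenthesised matrix appearing in Eqn.~\ref{eq:w_df_bound} is precisely the Sherman--Morrison rewrite of $(\mathbf{w}\mathbf{w}^{T}+\lambda\mathbf{I})^{-1}$, which signposts exactly how Lemma~\ref{lemma1} should be recast.

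First, I would apply the Sherman--Morrison identity to the rank-one perturbation in Lemma~\ref{lemma1}, giving $(\lambda\mathbf{I}+\mathbf{w}\mathbf{w}^{T})^{-1} = \frac{1}{\lambda}\bigl(\mathbf{I}-\frac{\mathbf{w}\mathbf{w}^{T}}{\lambda+\|\mathbf{w}\|_{2}^{2}}\bigr)$. Substituting into Lemma~\ref{lemma1} yields $\delta^{*}_{\text{SCFE}}=\frac{s-f(\mathbf{x})}{\lambda}\bigl(\mathbf{I}-\frac{\mathbf{w}\mathbf{w}^{T}}{\lambda+\|\mathbf{w}\|_{2}^{2}}\bigr)\mathbf{w}$. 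Second, I would specialise DeepFool to the linear scoring function $f(\mathbf{x})=\mathbf{w}^{\top}\mathbf{x}+b$: the gradient is $\mathbf{w}$, so the closed-form update from Section~\ref{sec:deepfool} collapses to $\delta^{*}_{\text{DF}}=-\frac{f(\mathbf{x})}{\|\mathbf{w}\|_{2}^{2}}\mathbf{w}$, and a single iteration already drives the score to zero, so no further iterations contribute.

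Third, I would form the difference $\mathbf{x}_{\text{SCFE}}-\mathbf{x}_{\text{DF}}=\delta^{*}_{\text{SCFE}}-\delta^{*}_{\text{DF}}$. Because both summands are of the form (matrix)$\cdot\mathbf{w}$, I can factor $\mathbf{w}$ to the right and write $\mathbf{x}_{\text{SCFE}}-\mathbf{x}_{\text{DF}} = \mathbf{M}\mathbf{w}$, where $\mathbf{M}$ is exactly the bracketed matrix in Eqn.~\ref{eq:w_df_bound}. Applying the sub-multiplicative inequality $\|\mathbf{M}\mathbf{w}\|_{p}\leq \|\mathbf{M}\|_{p}\|\mathbf{w}\|_{p}$ (the same Cauchy--Schwarz/H\"older-style step used in Theorem~\ref{thm1}, with $\|\mathbf{M}\|_{p}$ interpreted as the induced operator $p$-norm) yields the stated bound. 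The algebra is structurally identical to the Theorem~\ref{thm1} proof, with the $c\,\mathbf{I}$ contribution of \cw simply replaced by the $\frac{f(\mathbf{x})}{\|\mathbf{w}\|_{2}^{2}}\mathbf{I}$ contribution of DeepFool; the only real conceptual care is justifying the one-step reduction of the iterative DeepFool scheme in the locally linear regime, and once that is in place the rest of the argument is routine linear algebra.
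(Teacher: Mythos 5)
Your proposal matches the paper's own proof essentially step for step: it uses the linear-case DeepFool closed form $\delta^{*}_{\text{DF}}=-\frac{f(\mathbf{x})}{\lVert \mathbf{w}\rVert_2^2}\mathbf{w}$, the Sherman--Morrison rewrite of the Lemma~\ref{lemma1} perturbation, factoring $\mathbf{w}$ out of the difference, and the same sub-multiplicative (Cauchy--Schwarz/H\"older-style) norm step. Your extra remark justifying the one-step reduction of DeepFool in the linear regime is a small clarification the paper simply delegates to the cited closed-form formula, so there is no substantive difference.
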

\begin{hproof}
\looseness=-1
We show the similarity between \scfe and \deepfool methods by comparing their closed-form solutions for the generated \counter and \advexs. Similar to Theorem~\ref{thm1}, the results follow from Cauchy-Schwartz inequality, (see Appendix~\ref{app:thm2} for the complete proof). Moreover, choosing the optimal hyperparameter $\lambda^*=(\lVert \bw \rVert_2^2)/(\lVert \bw \rVert_2^2-1)$  and setting $s{:=}0$ yields equivalence, \ie $||\mathbf{x}_{\text{SCFE}} - \mathbf{x}_{\text{DF}}||_{p}=0$.
\end{hproof}
The right term in the inequality (Eqn.~\ref{eq:w_df_bound}) entails that the $l_{p}$-norm of the difference between the generated samples is bounded if: 1) the predicted score is closer to the target score of a given input, and 2) the gradients with respect to the logit scores of the underlying model are bounded.

\subsection{Manifold-based methods}\label{subsec:chvae_zhao}
We formalize the connection between manifold-based methods by comparing \zhao to \chvae as both rely on generative models. While \chvae uses variational autoencoders, NAE uses GANs, specifically Wasserstein GAN~\citep{arjovsky2017wasserstein}, to generate \advex. To allow a fair comparison, we assume that both methods use the same generator $\cG_{\theta}$ and inverter $I_{\gamma}$ networks.

\begin{proposition}
Let $p{=}\emptyset$ in \chvae. Assuming that \chvae and \zhao use the same generator $\cG_{\theta}$ and inverter functions $\cI_{\theta}$. Then the proposed objectives of \zhao and \chvae are equivalent.
\end{proposition}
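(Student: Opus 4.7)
The plan is to show the equivalence by direct manipulation of the two constrained optimization problems, since they are structurally similar and only differ in (i) the presence of protected features in C-CHVAE and (ii) whether the decision variable is the latent perturbation $\delta$ or the perturbed latent code $\tilde{\mathbf{z}}$. Removing (i) via the hypothesis $p=\emptyset$ and addressing (ii) via a change of variables should yield the result without any nontrivial analytic work.

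First, I would specialize the C-CHVAE objective (Eqn.~\ref{eq:cchvae}) under the assumption $p=\emptyset$. In this case there are no protected features, so $\mathbf{x}^f=\mathbf{x}$ and the classifier call $h(\cdot,\mathbf{x}^p)$ collapses to $h(\cdot)$. The C-CHVAE objective then reads
\begin{equation*}
\delta^* = \argmin_{\delta\in\mathcal{Z}} \, \lVert \delta \rVert \quad \text{s.t.} \quad h\big(\mathcal{G}_\theta(I_\gamma(\mathbf{x})+\delta)\big)\neq h(\mathbf{x}).
\end{equation*}

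Next, I would perform the invertible change of variables $\tilde{\mathbf{z}} := I_\gamma(\mathbf{x})+\delta$, equivalently $\delta = \tilde{\mathbf{z}} - I_\gamma(\mathbf{x})$. Because this map between $\delta\in\mathcal{Z}$ and $\tilde{\mathbf{z}}\in\mathcal{Z}$ is a bijection on the latent space (it is just a translation by the fixed vector $I_\gamma(\mathbf{x})$), the $\argmin$ over $\delta$ transports to an $\argmin$ over $\tilde{\mathbf{z}}$. Substituting, the reduced C-CHVAE problem becomes
\begin{equation*}
\tilde{\mathbf{z}}^* = \argmin_{\tilde{\mathbf{z}}\in\mathcal{Z}} \, \lVert \tilde{\mathbf{z}} - I_\gamma(\mathbf{x}) \rVert \quad \text{s.t.}\quad h\big(\mathcal{G}_\theta(\tilde{\mathbf{z}})\big)\neq h(\mathbf{x}),
\end{equation*}
which coincides identically with the NAE objective in Eqn.~\ref{eq:zhao}, since by hypothesis both methods share the same $\mathcal{G}_\theta$ and $I_\gamma$. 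Equivalence of the two optimization problems then follows, with the correspondence $\tilde{\mathbf{z}}^* = I_\gamma(\mathbf{x}) + \delta^*$.

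There is essentially no ``hard part'' here beyond being careful about two bookkeeping points. The first is that the hypothesis $p=\emptyset$ must be used explicitly to strip the protected-feature argument from $h$, so that the two classifier constraints have matching signatures. The second is to note that the change of variables preserves the feasible set (the manifold constraint $\mathcal{Z}$ is closed under translation, or more precisely the map is a bijection of $\mathcal{Z}$ onto itself) and preserves the norm up to the fixed offset $I_\gamma(\mathbf{x})$, so that both the objective and the constraint are transported to their NAE counterparts term by term. No assumption about the differentiability or the specific form (VAE vs.\ GAN) of $\mathcal{G}_\theta$ or $I_\gamma$ is needed for the statement about the \emph{objectives}; the proposition concerns only the optimization problems, not the solvers or the generative families used to realize $\mathcal{G}_\theta$ and $I_\gamma$.
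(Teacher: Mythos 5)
Your proposal is correct and follows essentially the same route as the paper: use $p=\emptyset$ to reduce the C-CHVAE constraint to $h(\cG_{\theta}(I_{\gamma}(\bx)+\delta))\neq h(\bx)$, then relate the two problems by the translation $\tilde{\bz}=I_{\gamma}(\bx)+\delta$ (the paper merely performs this substitution in the NAE objective rather than in the C-CHVAE one, which is the same bijective change of variables in the opposite direction). Your additional remarks about the bijection preserving the feasible set and the objective are just a more explicit bookkeeping of what the paper's one-line substitution does implicitly.
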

\begin{proof}
Since $p{=}\emptyset$, \eqref{eq:cchvae} reduces to:
\begin{equation}\label{eq:cchvae2}
    \delta^* {=} \argmin_{\mathclap{\delta \in \cZ}} \| \delta \| \text{ s.t. }h(\cG_{\theta}(I_{\gamma}(\bx^f)+\delta)) \neq h(\bx^f)
\end{equation}
Also, $I_{\gamma}(\bx){=}\bz$. Replacing $\tilde{\bz}{-}\bz{=}\delta$ in eqn.\ \ref{eq:zhao}, we get:
\begin{equation}\label{eq:zhao2}
    \delta^* = \argmin_{\delta \in \cZ} \| \delta\|
     \text{ s.t. } h(\cG_{\theta}(I_{\gamma}(\bx) + \delta)) \neq h(\bx)
\end{equation}
Since $\bx^f{=}\bx$, we get the equivalence.
\end{proof}
Both \chvae and \zhao use search methods to generate \advexs or \counters using the above objective function. In particular, both \zhao and \chvae samples $\bz$ using an $\ell_p$-norm ball of radius range $(r_{\text{NAE}}, \Delta r_{\text{NAE}}]$ and $r_{\text{C}}$. $\tilde{\bz}_{\text{NAE}}$ denotes the solution returned by~\citet{zhao2018generating} and $\tilde{\bz}_{\text{C}}$ the solution returned by \chvae. We denote $r_{\text{NAE}}^*$ and $r_{\text{C}}^*$ as the corresponding radius parameters from \zhao and \chvae, respectively, and restrict our analysis to the class of $L$-Lipschitz generative models:
\begin{definition}~\citet{bora2017compressed}:
A generative model $\cG_{\theta}(\cdot)$ is $L$-Lipschitz if~$\forall\,~\bz_{1},~\bz_{2} \in \cZ$, we have,
 \begin{equation}
     \|\cG_{\theta}(\bz_1) - \cG_{\theta}(\bz_2)\|_p \leq L \|\bz_1 - \bz_2\|_p.
 \end{equation}
\end{definition}
Note that commonly used DNN models comprise of linear, convolutional and activation layers, which satisfy Lipschitz continuity~\citep{gouk2021regularisation}.
\begin{lemma}(Difference between \chvae and \zhao)
\label{thm3}
Let $\tilde{\bz}_{\text{C}}$ and $\tilde{\bz}_{\text{NAE}}$ be the output generated by \chvae and \zhao by sampling from $\ell_p$-norm ball in the latent space using an $L$-Lipschitz generative model $\cG_{\theta}(\cdot)$. Analogously, let $\bx_{\text{NAE}}{=}\cG_{\theta}(\tilde{\bz}_{\text{NAE}})$ and $\bx_{\text{C}}{=}\cG_{\theta}(\tilde{\bz}_{\text{C}})$ generate perturbed samples by design of the two methods. Let $r_{\text{NAE}}^*$ and $r_{\text{C}}^*$ be the corresponding radii chosen by each algorithm such that they successfully return an \advex or \counter.   
Then, $\|\bx_{\text{C}} - \bx_{\text{NAE}}\| \leq L(r_{\text{C}}^* + r_{\text{NAE}}^*)$.
\end{lemma}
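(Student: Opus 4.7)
The plan is to derive the bound by decomposing the distance in the output space into a distance in the latent space and then passing that latent distance through the Lipschitz constant of $\cG_{\theta}$. By the description of both algorithms preceding the lemma, C-CHVAE and NAE each search for a perturbed latent code inside an $\ell_p$-ball centered at $I_{\gamma}(\bx)$, with respective radii $r_{\text{C}}^*$ and $r_{\text{NAE}}^*$ chosen to be the smallest such that the decoded sample flips the classifier's prediction. So the first step I would record is the two containment facts $\|\tilde{\bz}_{\text{C}} - I_{\gamma}(\bx)\|_p \le r_{\text{C}}^*$ and $\|\tilde{\bz}_{\text{NAE}} - I_{\gamma}(\bx)\|_p \le r_{\text{NAE}}^*$, both of which follow directly from the sampling procedure described above the lemma.

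Next, I would apply the triangle inequality in the latent space through the common center $I_{\gamma}(\bx)$ to get
\begin{equation*}
\|\tilde{\bz}_{\text{C}} - \tilde{\bz}_{\text{NAE}}\|_p \le \|\tilde{\bz}_{\text{C}} - I_{\gamma}(\bx)\|_p + \|I_{\gamma}(\bx) - \tilde{\bz}_{\text{NAE}}\|_p \le r_{\text{C}}^* + r_{\text{NAE}}^*.
\end{equation*}
Then, using the $L$-Lipschitz assumption on $\cG_{\theta}$ together with $\bx_{\text{C}} = \cG_{\theta}(\tilde{\bz}_{\text{C}})$ and $\bx_{\text{NAE}} = \cG_{\theta}(\tilde{\bz}_{\text{NAE}})$, I would conclude
\begin{equation*}
\|\bx_{\text{C}} - \bx_{\text{NAE}}\|_p = \|\cG_{\theta}(\tilde{\bz}_{\text{C}}) - \cG_{\theta}(\tilde{\bz}_{\text{NAE}})\|_p \le L\,\|\tilde{\bz}_{\text{C}} - \tilde{\bz}_{\text{NAE}}\|_p \le L\,(r_{\text{C}}^* + r_{\text{NAE}}^*),
\end{equation*}
which is exactly the claimed inequality.

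There is essentially no hard step here; the entire argument is a two-line chain of triangle inequality and Lipschitz continuity once the centered-ball sampling observation is made. The only subtlety I would flag is that both algorithms must share the common center $I_{\gamma}(\bx)$ for the triangle inequality to give the clean sum $r_{\text{C}}^* + r_{\text{NAE}}^*$; this is implicit in the lemma statement because we assume both methods use the same inverter $I_{\gamma}$ and the same generator $\cG_{\theta}$, which is the setting established for the comparison in Section~\ref{subsec:chvae_zhao}. Writing the proof, I would therefore just make this shared-center assumption explicit at the start and present the two displayed inequalities in sequence.
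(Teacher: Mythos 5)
Your proof is correct and rests on the same two ingredients as the paper's: the radius bounds on the latent perturbations, the triangle inequality through the common center $\bz = I_{\gamma}(\bx)$, and the $L$-Lipschitz property of $\cG_{\theta}$. The only difference is the order of operations: you apply the triangle inequality in the latent space and then push through $\cG_{\theta}$ once, whereas the paper works in the feature space, inserting $\bx$ between $\cG_{\theta}(\tilde{\bz}_{\text{NAE}})$ and $\cG_{\theta}(\tilde{\bz}_{\text{C}})$, rewriting $\bx$ as $\cG_{\theta}(\bz)$, and applying Lipschitzness to each of the two terms separately. That rewriting implicitly assumes exact reconstruction, $\bx = \cG_{\theta}(I_{\gamma}(\bx))$, which your latent-space decomposition never needs; in that respect your route is marginally cleaner and slightly more general, while arriving at exactly the same bound $L(r_{\text{C}}^* + r_{\text{NAE}}^*)$. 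Your closing remark about the shared center $I_{\gamma}(\bx)$ is precisely the assumption that both proofs rely on, so making it explicit is a good addition.
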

\begin{hproof}
\looseness=-1
The proof follows from triangle inequality, $L$-Lipschitzness of the generative model, and the fact that the $\ell_p$-norm of the method's outputs are known in the latent space. See Appendix~\ref{app:thm3} for a detailed proof.
\end{hproof}
\looseness=-1
Intuitively, the \advex and \counterexp generated by the methods are bounded depending on the data manifold properties (captured by the Lipschitzness of the generative model) and the radius hyperparameters used by the search algorithms.

\begin{figure*}[tb]
    \centering
\begin{subfigure}[b]{0.32\textwidth}
         \centering
         \includegraphics[width=\textwidth]{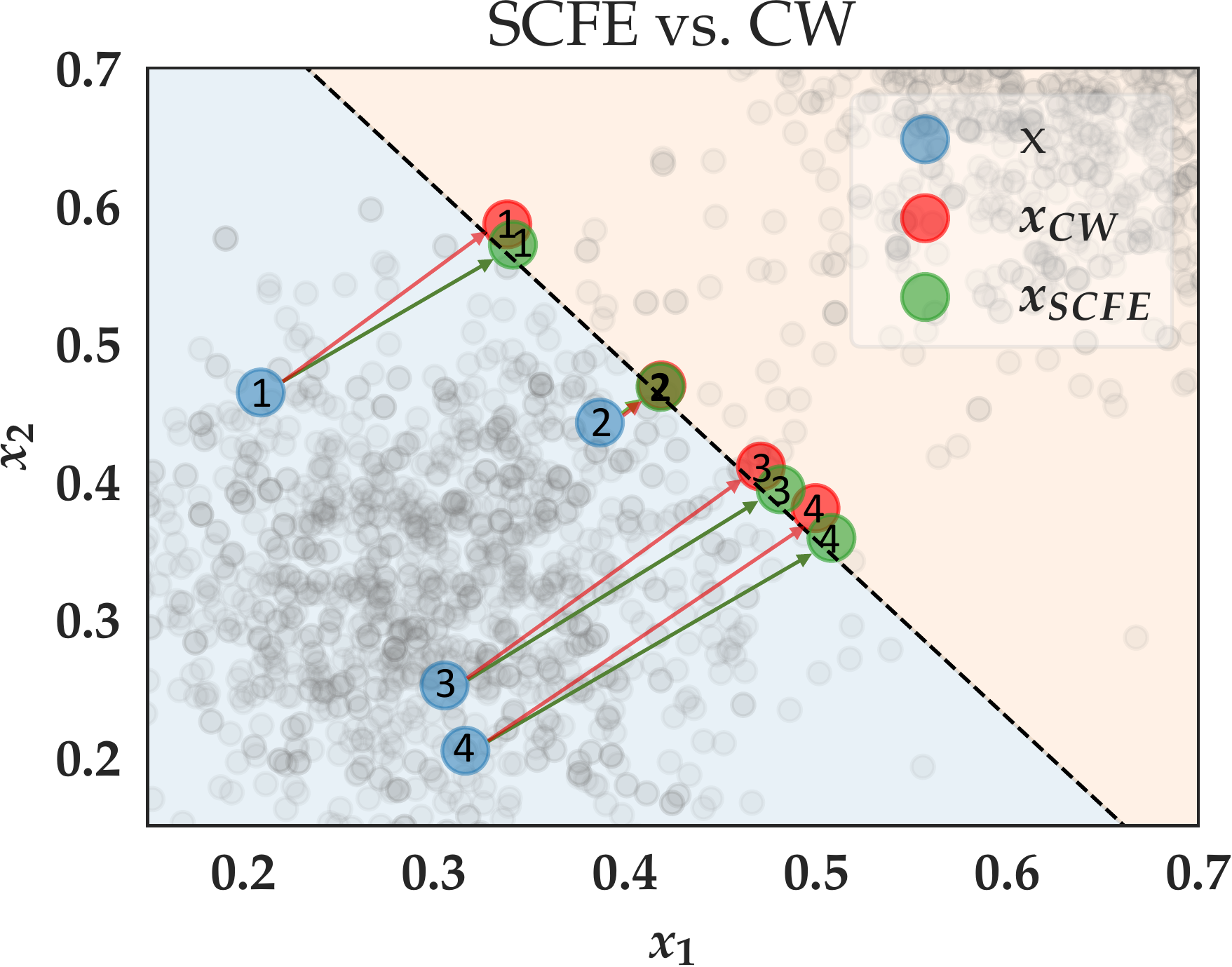}
\end{subfigure}
\begin{subfigure}[b]{0.32\textwidth}
         \centering
         \includegraphics[width=\textwidth]{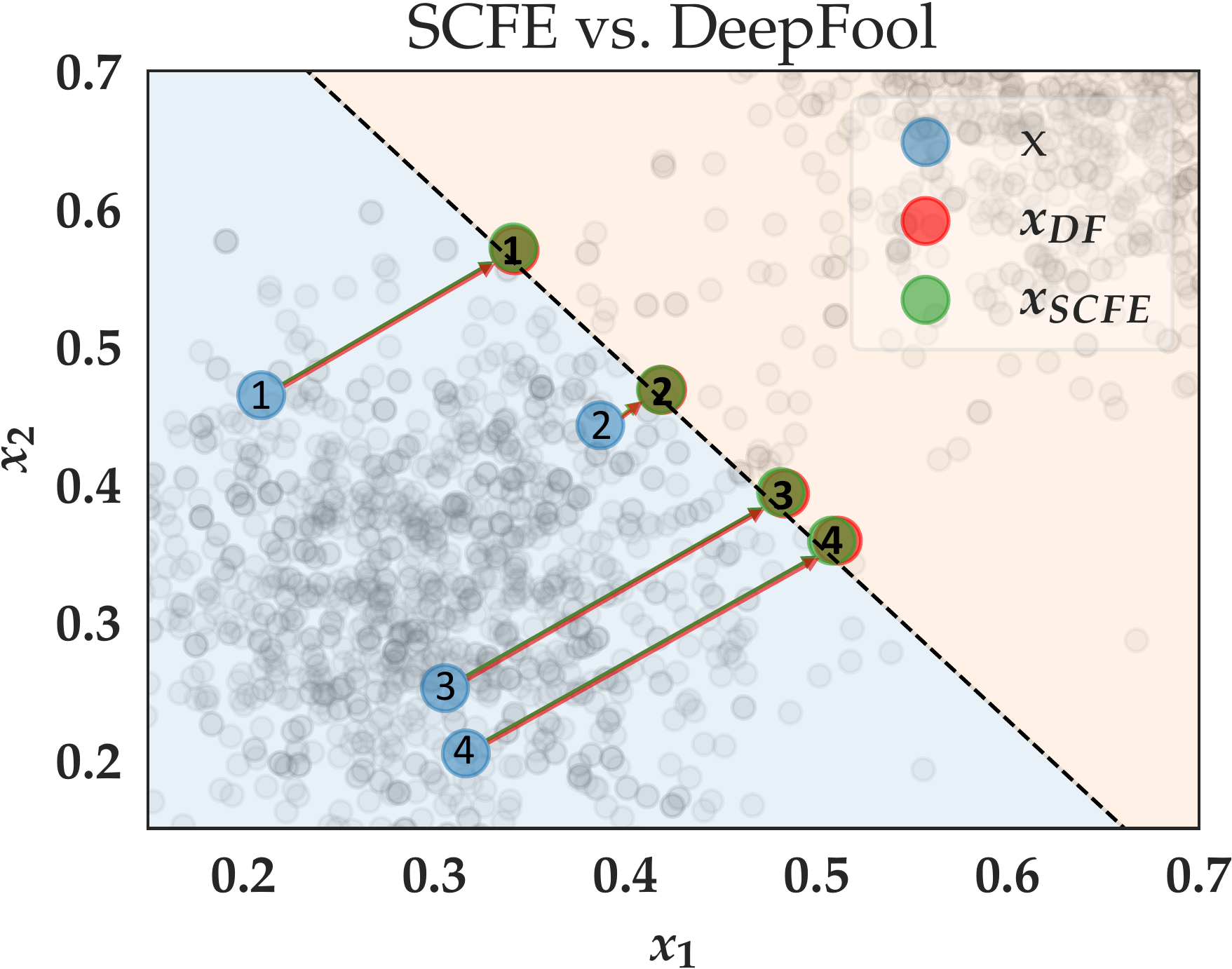}
\end{subfigure}
\begin{subfigure}[b]{0.32\textwidth}
         \centering
         \includegraphics[width=\textwidth]{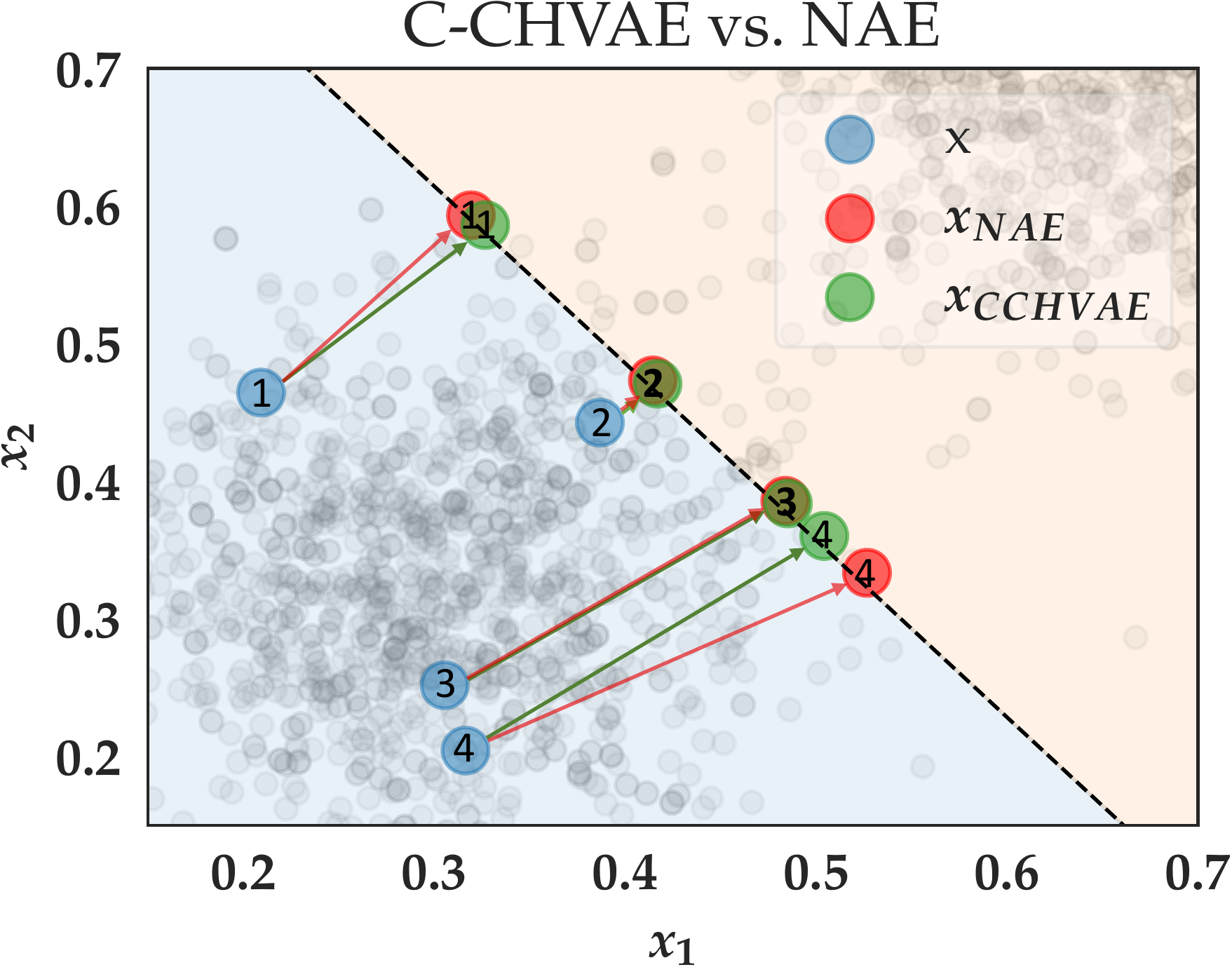}
\end{subfigure}
    \caption{ Similarity comparison of \advex and \counterexp methods. Based on synthetic data, we generate \advexs (in red) and \counterexps (in green) for some randomly chosen test set points (in blue) using methods described in Sec.~\ref{sec:prelim}. (\textbf{Left}) Both \scfe (in green) and \cw (in red) samples are close to each other, indicating strong similarity between these methods. (\textbf{Middle}) \scfe (in green) and \deepfool (in red) samples exactly coincide, indicating equivalence. (\textbf{Right}) \chvae (in green) and \zhao (in red) samples are closer if the blue factual points are closer to the boundary.
    }
    \label{fig:synthetic_2d}
\end{figure*}

\section{Experimental Evaluation}\label{sec:results}
\looseness=-1
We now present the empirical analysis to demonstrate the similarities between \counterexps and \advexs. More specifically, we verify the validity of our theoretical upper bounds using real-world datasets and determine the extent to which \counterexps and \advexs similar to each other.



\subsection{Experimental Setup}\label{sec:setup}
We first describe the synthetic and real-world datasets used to study the connections between \counterexps and \advexs, and then we outline our experimental setup.

\xhdr{Synthetic Data} We generate $5000$ samples from a mixture of Gaussians with pdfs $\cN(\mu_1{=}[1.0,1.0], \Sigma_1{=}\textbf{I})$ and $\cN(\mu_2{=}[-1.0, -1.0], \Sigma_2{=}\textbf{I})$. 

\xhdr{Real-world Data} We use three datasets in our experiments. 1) The \textit{UCI Adult} dataset \citep{Dua:2019} consisting of $48842$ individuals with demographic (\eg age, race, and gender), education (degree), employment (occupation, hours-per-week), personal (marital status, relationship), and financial (capital gain/loss) features. The task is to predict whether an individual's income exceeds $\$50\text{K}$ per year or not. 2) The \textit{COMPAS} dataset \citep{larson2016we} comprising of 10000 individuals representing defendants released on bail. The task is to predict whether to release a defendant on bail or not using features, such as criminal history, jail, prison time, and defendant's demographics. 3) The \textit{German Credit} dataset from the UCI repository~\citep{Dua:2019} consisting of demographic (age, gender), personal (marital status), and financial (income, credit duration) features from $1000$ credit applications. The task is to predict whether an applicant qualifies for credit or not.

\looseness=-1
\xhdr{Methods} Following our analysis in Sec.~\ref{sec:compare}, we compare the following pair of methods: i) \scfe \citep{wachter2017counterfactual} vs.\ \cw \citep{carlini2017towards}, ii) \scfe vs.\ \deepfool \citep{moosavi2016deepfool}, and iii) \chvae \citep{pawelczyk2020learning} vs.\ \zhao \citep{zhao2018generating}. 

\xhdr{Prediction Models} For the synthetic dataset, we train a logistic regression model (LR) to learn the mixture component (samples and corresponding decision boundary shown in Fig.~\ref{fig:synthetic_2d}), whereas for real-world datasets, we obtain \advexs and \counters using LR and artificial neural network (ANN) models. See Appendix~\ref{app:expts} for more details.


\looseness=-1
\textbf{Implementation Details} 
For all real-world data, \advexs and \counters are generated so as to flip the target prediction label from unfavorable ($y{=}0$) to favorable ($y{=}1$). We use $\ell_2$-norm as the distance function in all our experiments. 
We partition the dataset into train-test splits where the training set is used to train the predictor models.
\Advexs and \counters are generated for the trained models using samples in the test splits. For \counterexpmethods applied to generate recourse, all features are assumed actionable for fair comparison with \advexmethods. 
See Appendix~\ref{app:expts} for more implementation details.

\subsection{Results}
\xhdr{Validating our Theoretical Upper Bounds}
\label{res:theory_val} We empirically validate the theoretical upper bounds obtained in Sec.~\ref{sec:compare}. To this end, we first estimate the bounds for each instance in the test set according to Theorems~\ref{thm1} and \ref{thm2}, and compare them with the empirical estimates of the $\ell_2$-norm differences (LHS of Theorems 1 and 2). 
We use the same procedure to validate the bounds from Lemma \ref{lemma2}.

\looseness=-1
\xhdr{\scfe~vs.\ \cw and \deepfool} In Fig.~\ref{fig:boundsall}, we show the empirical evaluation of our theoretical bounds for all real-world datasets. For each dataset, we show four box-plots: empirical estimates (green) and theoretical upper bounds (blue) of the distance ($\ell_2$-norm) between the resulting \counters and \advexs for \scfe and \cw (labeled as SCFE vs. CW), and \scfe and \deepfool (labeled as SCFE vs. DF). Across all three datasets, we observe that no bounds were violated for both theorems. The gap between empirical and theoretical values is relatively small for German credit dataset as compared to COMPAS and Adult datasets.
From Theorems~\ref{thm1} and \ref{thm2}, we see that the bound strongly depends on the norm of the logit score gradient $w{=}\nabla_x f(\bx)$, \eg for Adult dataset these norms are relatively higher leading to less tight bounds.

\xhdr{\chvae vs.\zhao} In Fig.~\ref{fig:bound_logistic_lem2}, we validate the bounds obtained in \lemref{lemma2} for all three datasets using an encoder-decoder framework. We observe that our upper bounds are tight, thus validating our theoretical analysis for comparing manifold-based \counterexp (\chvae) and \advexmethod (\zhao).
\begin{figure}[!t]
\centering
\vspace{-0.1in}
\begin{subfigure}[b]{0.32\columnwidth}
         \centering
         \includegraphics[width=\textwidth]{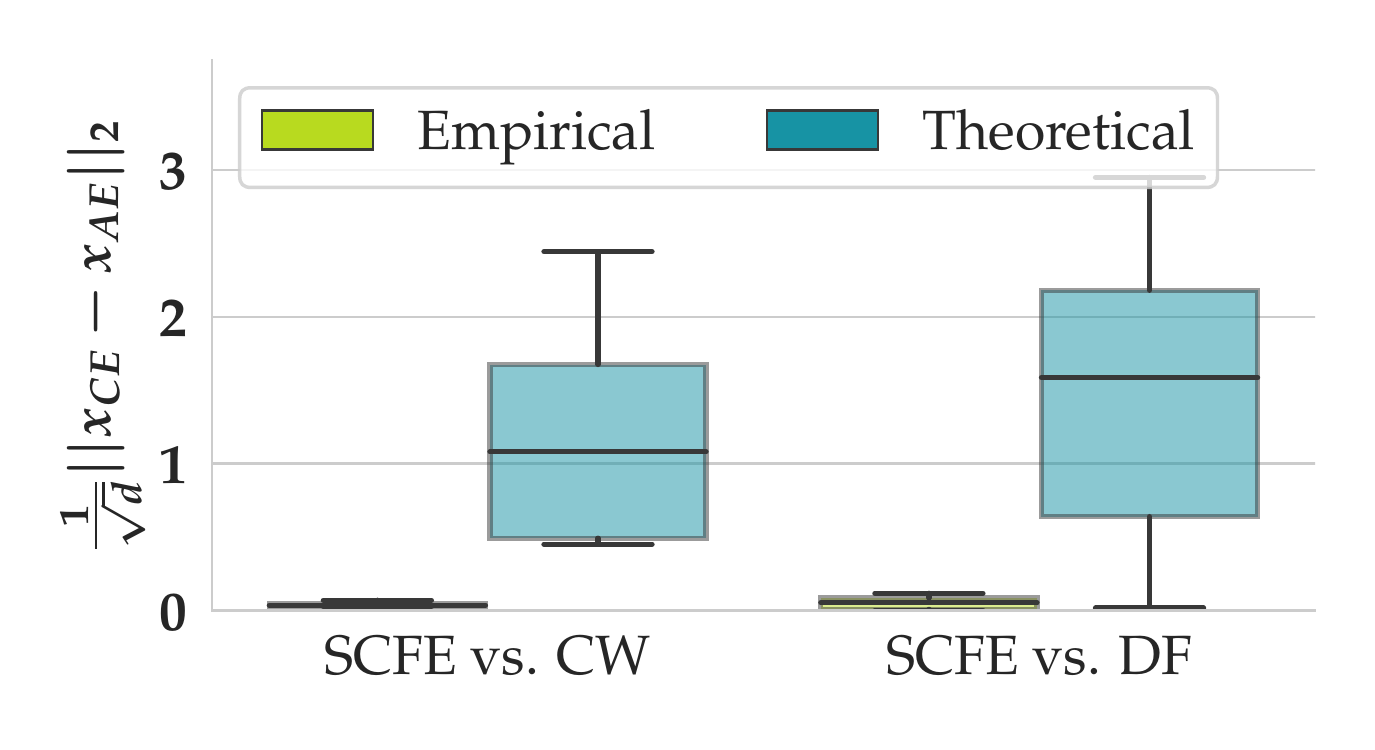}
         \caption{Adult -- LR}
         \label{fig:bound_adult_logistic}
\end{subfigure}
\hfill 
\begin{subfigure}[b]{0.32\columnwidth}
         \centering
         \includegraphics[width=\textwidth]{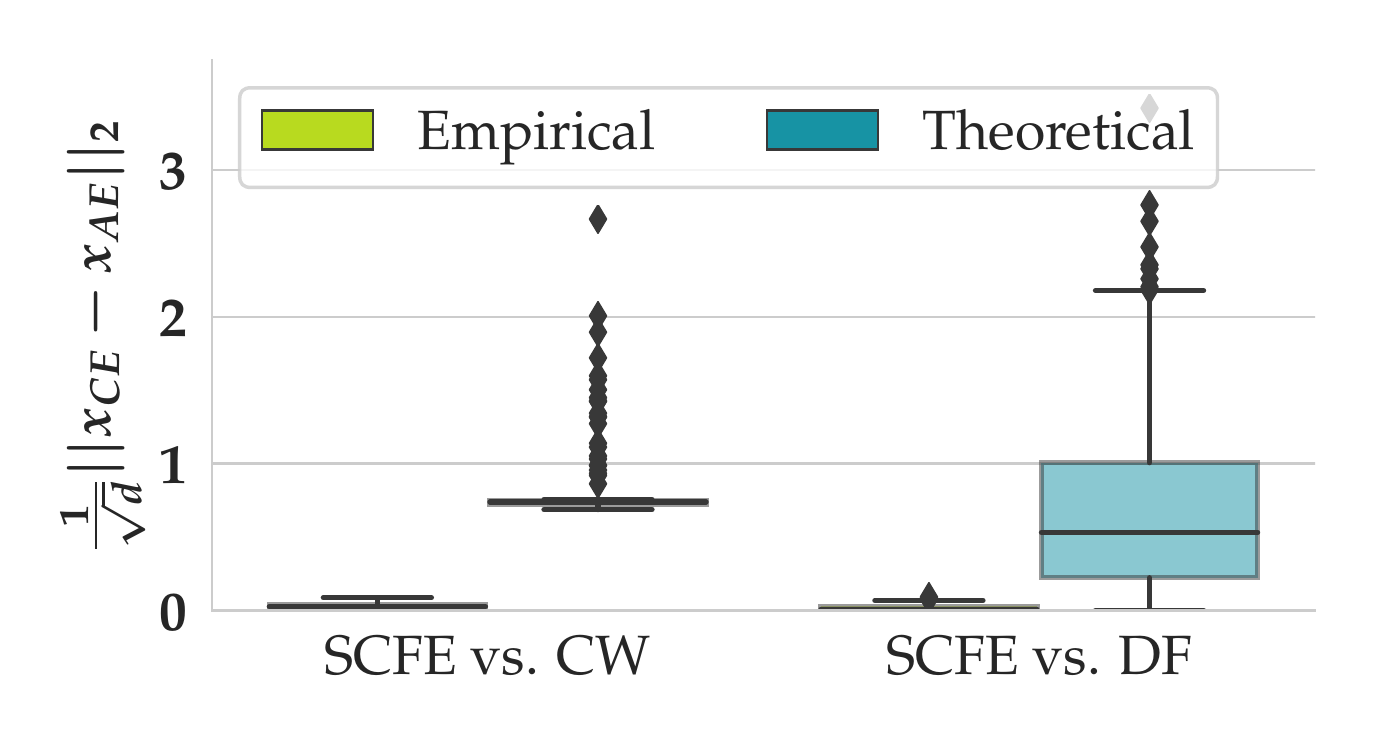}
         \caption{COMPAS -- LR}
         \label{fig:bound_compas_logistic}
\end{subfigure}
\hfill
\begin{subfigure}[b]{0.32\columnwidth}
         \centering
         \includegraphics[width=\textwidth]{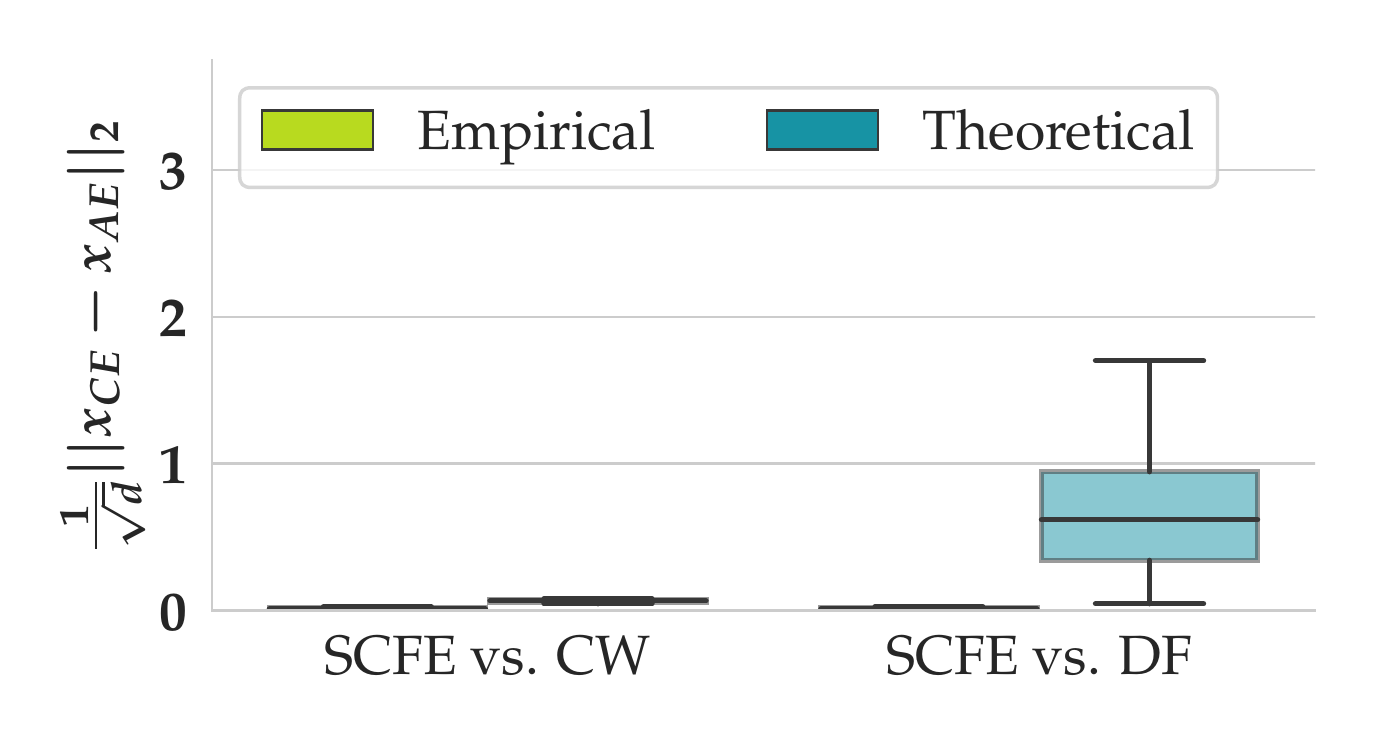}
         \caption{
         German Credit -- LR}
         \label{fig:bound_credit_logistic}
\end{subfigure}
\vfill
\begin{subfigure}[b]{0.32\textwidth}
         \centering
         \includegraphics[width=\textwidth]{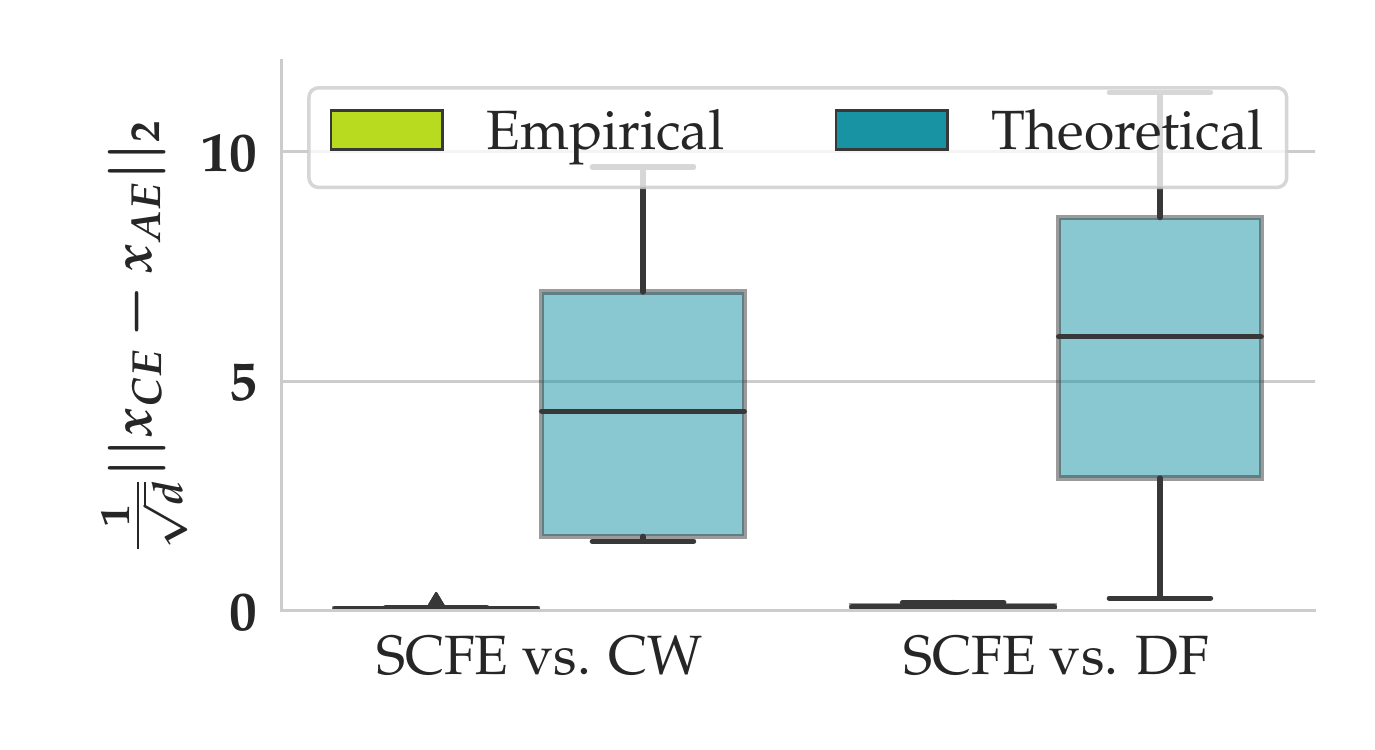}
         \caption{Adult -- ANN}
         \label{fig:bound_adult_logistic}
\end{subfigure}
~
\begin{subfigure}[b]{0.32\columnwidth}
         \centering
         \includegraphics[width=\textwidth]{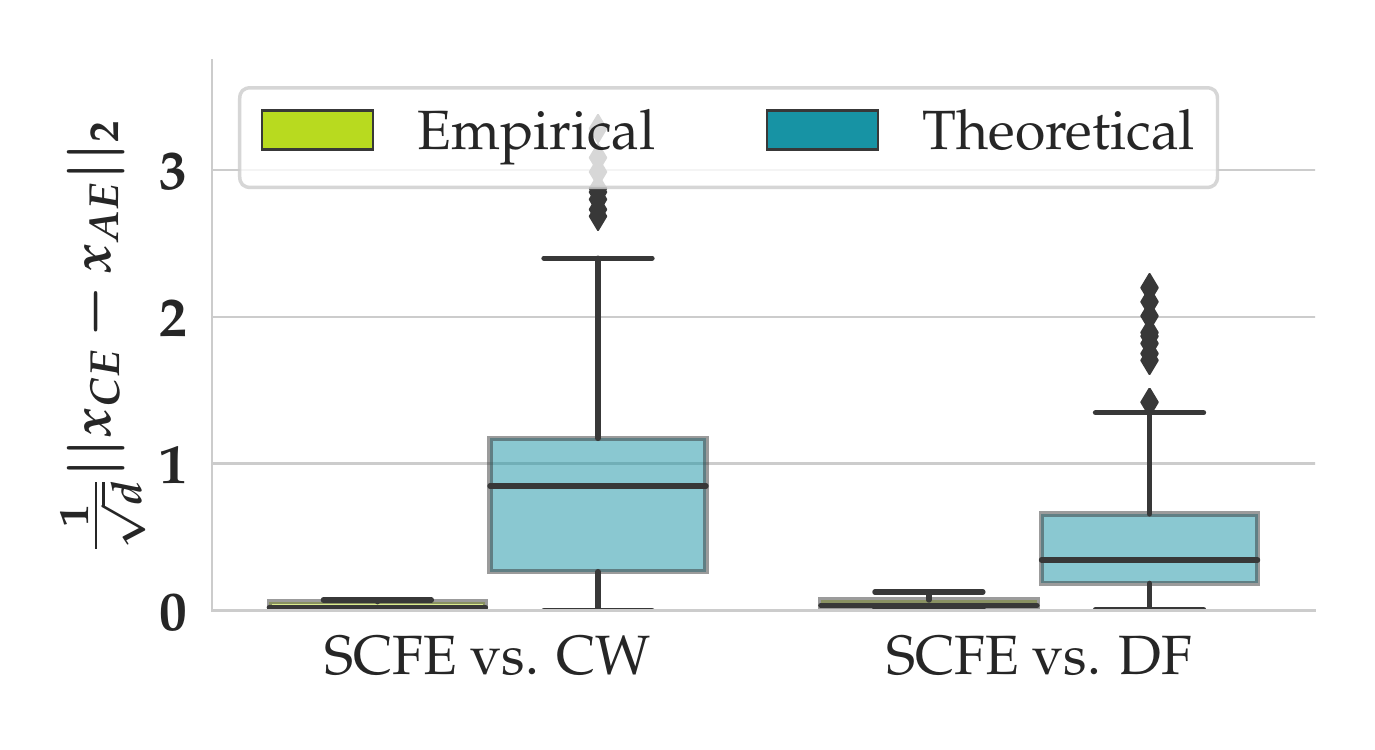}
         \caption{COMPAS -- ANN}
         \label{fig:bound_compas_logistic}
\end{subfigure}
\hfill
\begin{subfigure}[b]{0.32\columnwidth}
         \centering
         \includegraphics[width=\textwidth]{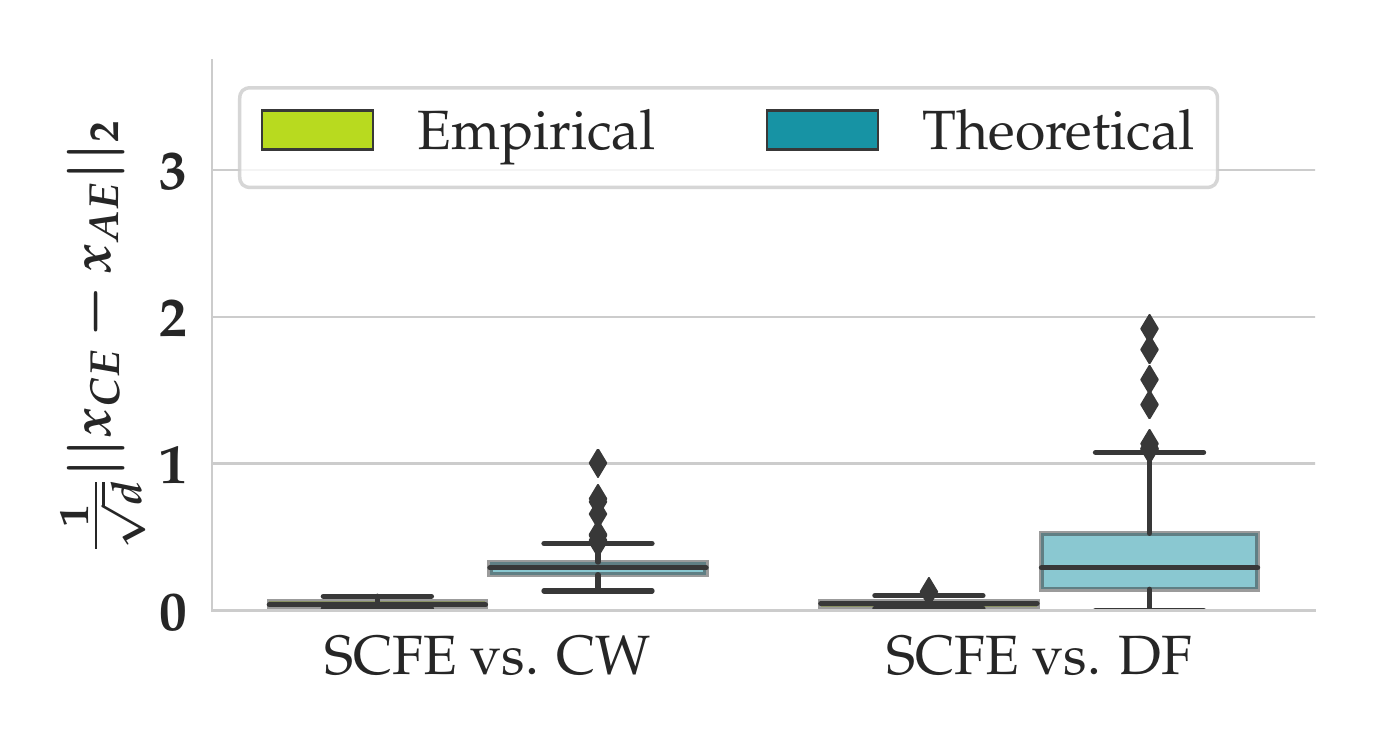}
         \caption{
         German Credit -- ANN}
         \label{fig:bound_credit_logistic}
\end{subfigure}

\caption{Verifying the theoretical bounds from Theorems \ref{thm1} and \ref{thm2}. The green boxplots correspond to the empirical norm differences 
between SCFE (i.e., $x_{\text{CE}}$) and CW or DF (i.e., $x_{\text{AE}}$). The blue boxplots show the distribution of upper bounds, which we evaluated by plugging in the necessary quantities (hyperparameters, gradients, logit values) into equations \ref{eq:w_cw_bound} and \ref{eq:w_df_bound}. No bounds are violated. For ANNs, the upper bounds were computed using local linear model approximations.}
\vspace{-0.1in}
\label{fig:boundsall}
\end{figure}
\begin{figure}[tb]
\vspace{-0.1in}
\begin{subfigure}[b]{0.32\textwidth}
         \centering
         \includegraphics[width=\textwidth]{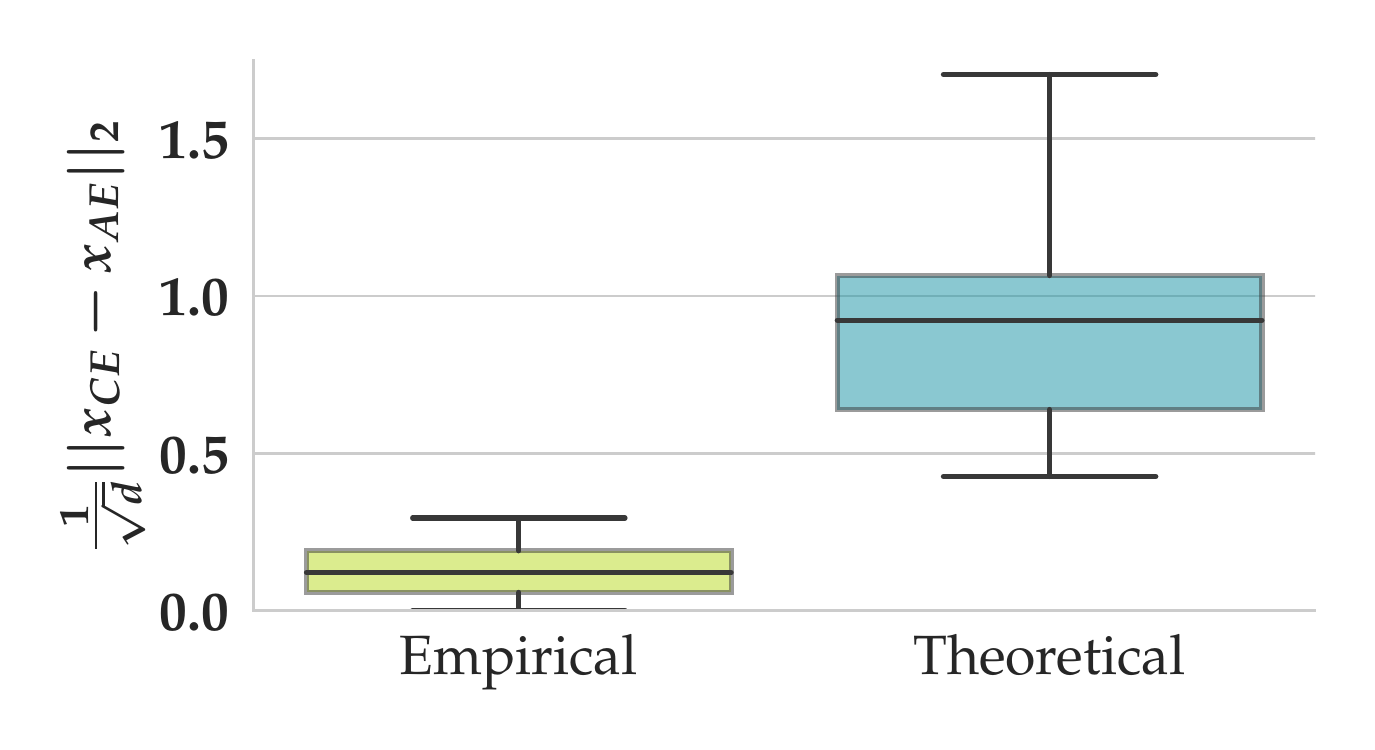}
         \caption{Adult -- ANN}
         \label{fig:bound_adult_logistic_lem2}
\end{subfigure}
\hfill
\begin{subfigure}[b]{0.32\textwidth}
         \centering
         \includegraphics[width=\textwidth]{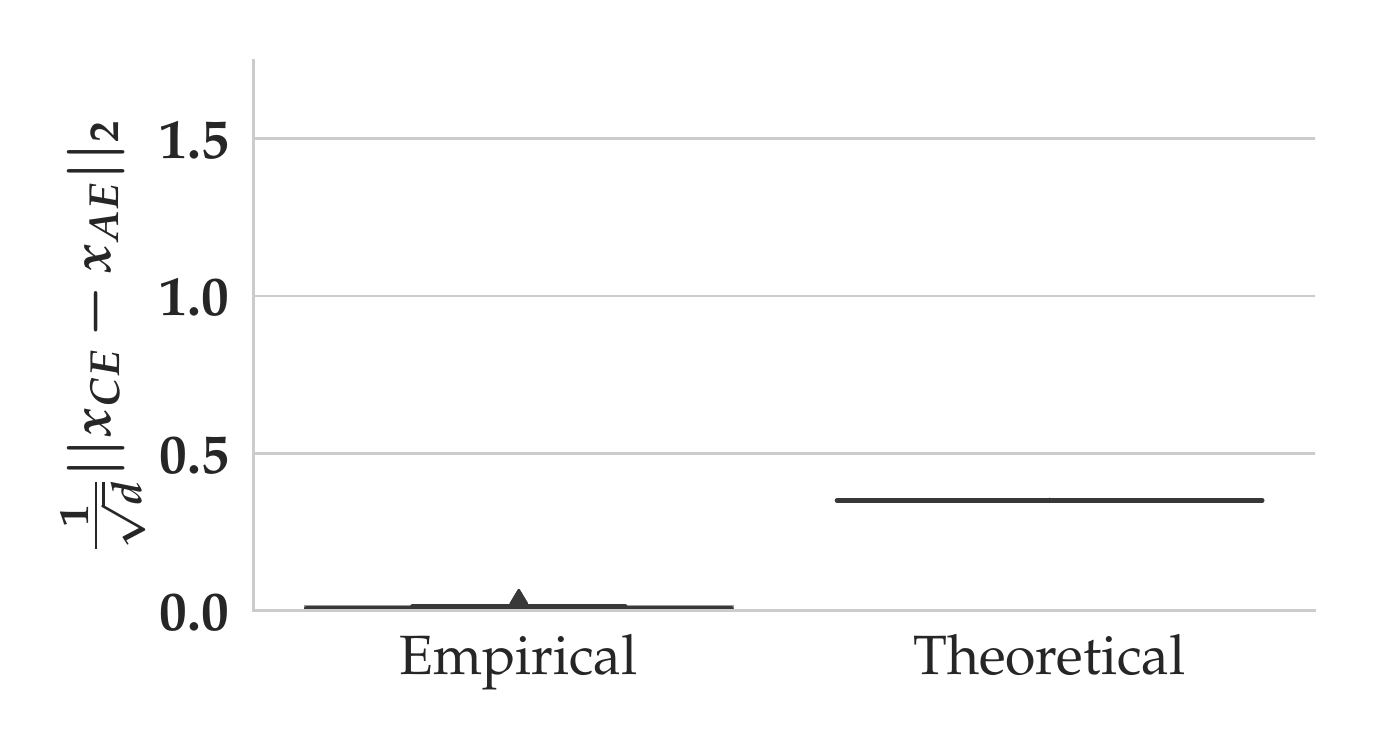}
         \caption{COMPAS -- ANN}
         \label{fig:bound_compas_logistic_lem2}
\end{subfigure}
\hfill
\begin{subfigure}[b]{0.32\textwidth}
         \centering
         \includegraphics[width=\textwidth]{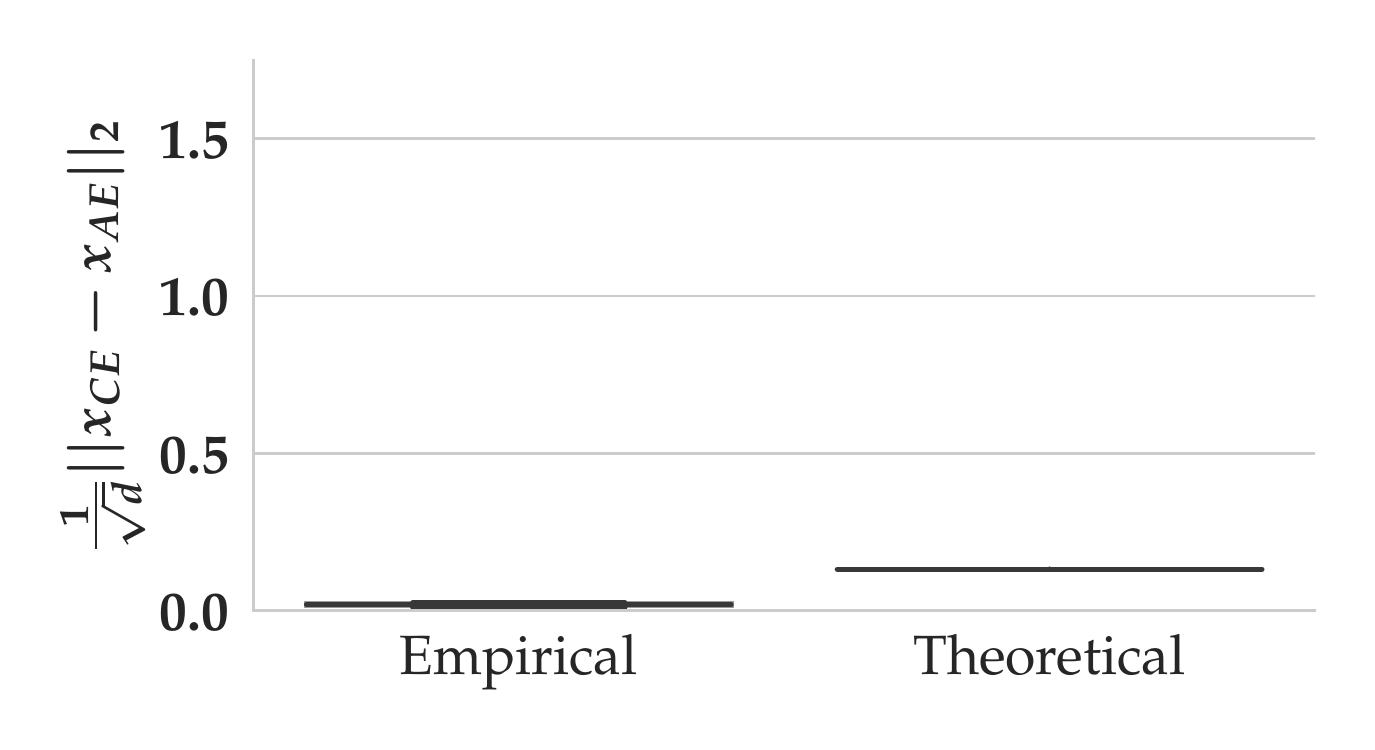}
         \caption{German Credit -- ANN}
         \label{fig:bound_credit_logistic_lem2}
\end{subfigure}
\caption{Verifying the theoretical bounds from Lemma \ref{thm3}. The green boxplots correspond to the empirical norm differences between CCHVAE (i.e., $x_{\text{CE}}$) and NAE (i.e., $x_{\text{AE}}$). The blue boxplots show the distribution of upper bounds, which we evaluated by plugging in the corresponding quantities (hyperparameters, Lipschitz constant) into the upper bound from Lemma \ref{thm3}. The Lipschitz constant is computed based on decoders and encoders using Lemma 4. No bounds are violated.}
\label{fig:bound_logistic_lem2}
\end{figure}%

\xhdr{Similarities between \Counters and \Advexs}\label{subsec:probing} Here, we qualitatively and quantitatively show the similarities between \counters and \advexs using several datasets.

\looseness=-1
\xhdr{Analysis with Synthetic Data} In Fig.~\ref{fig:synthetic_2d}, we show the similarity between \counterexps and \advexs generated for a classifier trained on a two-dimensional mixture of Gaussian datasets. Across all cases, we observe that most output samples generated by \counterexp and \advex methods overlap. In particular, for samples near the decision boundary, the solutions tend to be more similar. These results confirm our theoretical bounds, which depend on the difference between the logit sample prediction $f(\bx)$ and the target score $s$. If points are close to the decision boundary, $f(\bx)$ is closer to $s$, suggesting that the resulting \counter and \advex will be closer as implied by Theorems~\ref{thm1} and \ref{thm2}.
\begin{table*}[ht]
\centering
\caption{Average Spearman rank correlation between counterfactual and adversarial perturbations. For every input $\bx$, we compute the corresponding adversarial perturbation $\delta_{\text{AE}}$ and the counterfactual perturbation $\delta_{\text{CE}}$. We then compute Spearman's $\rho(\delta_{\text{AE}},\delta_{\text{CE}})$ and report their means (gradient-based: (g); manifold-based: (m)).}
\label{tab:rank_correlation}
\scalebox{0.81}{
\begin{tabular}{lcccccccc} 
\toprule  
& \multicolumn{4}{c}{COMPAS} & \multicolumn{4}{c}{Adult} \\
\cmidrule(lr){2-5} \cmidrule(lr){6-9}
& \multicolumn{2}{c}{LR} & \multicolumn{2}{c}{ANN} & \multicolumn{2}{c}{LR} & \multicolumn{2}{c}{ANN}  \\
\cmidrule(lr){2-3} \cmidrule(lr){4-5} \cmidrule(lr){6-7} \cmidrule(lr){8-9}
Model &      SCFE (g) &       CCHVAE (m) &  SCFE (g) &       CCHVAE (m) & SCFE (g) &       CCHVAE (m) & SCFE (g) &       CCHVAE (m) \\ 
\cmidrule(lr){1-1} \cmidrule(lr){2-2} \cmidrule(lr){3-3} \cmidrule(lr){4-4} \cmidrule(lr){5-5} \cmidrule(lr){6-6} \cmidrule(lr){7-7} \cmidrule(lr){8-8} \cmidrule(lr){9-9}
CW (g) &  $0.88 \pm   0.16$ & $0.67 \pm 0.30$ & $0.93 \pm 0.10$ & $0.67 \pm 0.22$ & $\mathbf{0.95 \pm 0.06}$ & $0.86 \pm 0.10$ & $0.92 \pm 0.09$ & $0.70 \pm 0.16$ \\ 
DF (g) &  $\mathbf{0.91 \pm  0.12}$ & $0.68 \pm 0.31$ & $\mathbf{0.97 \pm 0.03}$ & $0.65 \pm 0.22$ & $0.92 \pm 0.06$ & $0.80 \pm 0.13$ & $\mathbf{0.93 \pm 0.08}$ & $0.63 \pm 0.20$ \\  
NAE (m) &  $0.57 \pm  0.35$ & $\mathbf{0.94 \pm 0.08}$ & $0.71 \pm 0.19$ & $\mathbf{1.00 \pm 0.00}$ & $0.83 \pm 0.12$ & $\mathbf{0.90 \pm 0.10}$ &  $0.74 \pm 0.13$ & $\mathbf{0.98 \pm 0.02}$ \\ 
\bottomrule 
\end{tabular}
}
\end{table*}

\xhdr{Analysis with Real Data} For real-world datasets, we define two additional metrics beyond those studied in our theoretical analysis to gain a more granular understanding about the similarities of \counters and \advexs. First, we introduce $d_{\text{match}}$ which quantifies the similarity between \counters (i.e., $\bx_{\text{CE}}$) and \advexs (i.e., $\bx_{\text{AE}}$):
\begin{align}
    d_{\text{match}} &= \frac{1}{n} \sum_{i=1}^n \mathbbm{1} \bigg[ \frac{1}{\sqrt{d}} \lVert \bx_{\text{CE}}^{(i)} - \bx_{\text{AE}}^{(i)} \rVert_2 < \theta \bigg] \label{eq:eval_measures2},
\end{align}
\looseness=-1
where $n$ is the total number of instances used in the analysis and $\theta{\in}\{0.02, 0.05, 0.1\}$ is a threshold determining when to consider \counter and \advexs as equivalent. $d_{\text{match}}$ evaluates whether \counters and \advexs are exactly the same with higher $d_{\text{match}}$ implying higher similarity.
Second, we complement $d_{\text{match}}$ by Spearman rank $\rho$ between $\delta_{\text{CE}}$ and $\delta_{\text{AE}}$, which is a rank correlation coefficient measuring to what extent the perturbations' rankings agree, i.e., whether \advexmethods and \counterexpmethods deem the same dimensions important in order to arrive at their final outputs.
Here, $\rho(\delta_{\text{CE}},\delta_{\text{AE}}){=}1$ implies that the rankings are same, $0$ suggests that the rankings are independent, and $-1$ indicates reversely ordered rankings.
\begin{figure}[tb]
\begin{subfigure}{0.5\textwidth}
         \centering
         \includegraphics[width=\textwidth]{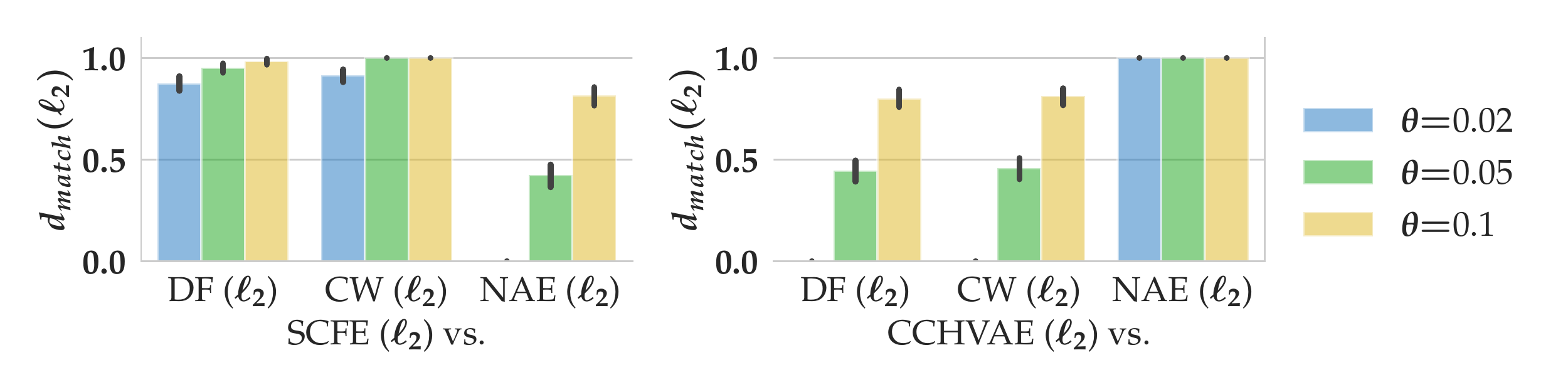}
         \vspace{-0.25in}
         \caption{COMPAS}
         \label{fig:dmatch_compas_logistic}
\end{subfigure}
\hfill
\begin{subfigure}{0.5\textwidth}
         \centering
         \includegraphics[width=\textwidth]{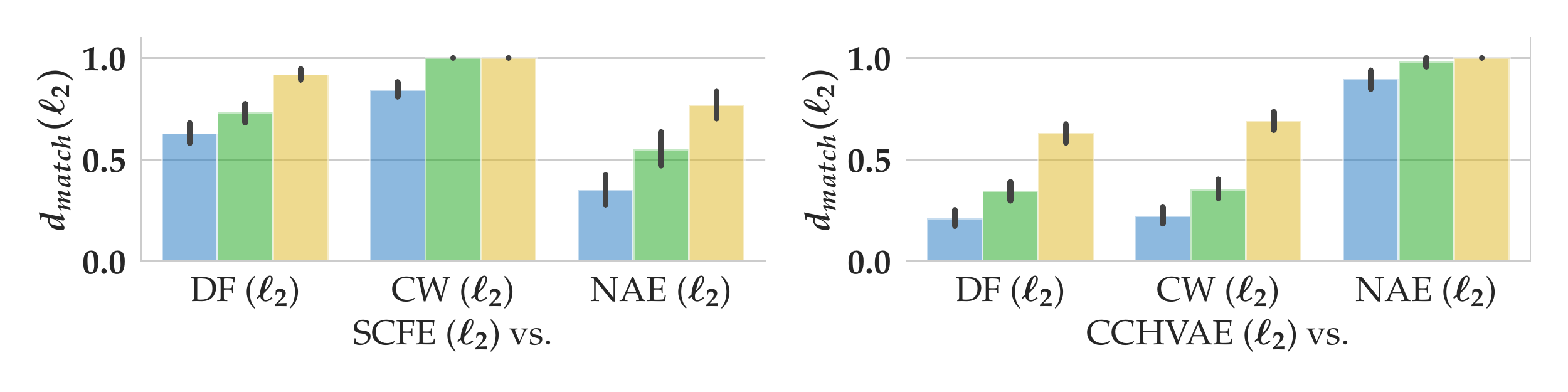}
         \vspace{-0.25in}         
         \caption{Adult}
         \label{fig:dmatch_adult_logistic}
\end{subfigure}
\vspace{-0.1in}
\caption{Analyzing to what extent different \counterexpmethods and \advexmethods are empirically equivalent for the logistic regression classifier. To do that, we compute $d_{\text{match}}$ from Eqn.~\ref{eq:eval_measures2}.
Missing bars indicate that there was no match. 
}
\vspace{-0.1in}
\label{fig:dmatch_all}
\end{figure}

In Fig.~\ref{fig:dmatch_all}, we compare a given \counterexpmethod to salient \advexmethods (\deepfool, \cw, and \zhao) using $d_{\text{match}}$. We show the results for Adult and COMPAS datasets using LR models and relegate results for German Credit as well as neural network classifiers to Appendix~\ref{app:additional_experiments}. Our results in Fig.~\ref{fig:dmatch_all} validate that the \scfe method is similar to \deepfool and \cw (higher $d_{\text{match}}$ for lower $\theta$). Across all datasets, this result aligns and validates with the similarity analysis in Sec.~\ref{sec:compare}. Similarly, manifold-based methods demonstrate higher $d_{\text{match}}$ compared to non-manifold methods (right panels in Fig.~\ref{fig:dmatch_all}). Additionally, we show the results from the rank correlation analysis in Table \ref{tab:rank_correlation} and observe that the maximum rank correlations (between 0.90 and 1.00) are obtained for methods that belong to the same categories, suggesting that the considered \counters and \advexs are close to being equivalent.

\section{Conclusion}
In this work, we formally analyzed the connections between state-of-the-art \advexmethods and \counterexpmethods. To this end, we first highlighted salient \counterexp and \advex methods in literature, and leveraged similarities in their objective functions, optimization algorithms and constraints utilized in these methods to theoretically analyze conditions for equivalence 
and bound the distance between the solutions output by \counterexp and \advexmethods. 
For locally linear models, we bound the distance between the solutions obtained by \cw and \scfe using loss functions preferred in the respective works. We obtained similar bounds for the solutions of \deepfool and \scfe.
We also demonstrated equivalence between the manifold-based methods of \zhao and \chvae and bounded the distance between their respective solutions.
Finally, we empirically evaluated our theoretical findings on simulated and real-world data sets. 


By establishing theoretically and empirically that several popular \counterexp algorithms are generating extremely similar solutions as those of well known \advex algorithms, our work raises fundamental questions about the design and development of existing \counterexp algorithms.
\emph{Do we really want counterfactual explanations to resemble adversarial examples, as our work suggests they do?} \emph{How can a decision maker distinguish an adversarial attack from a \counterexp?} \emph{Does this imply that decision makers are tricking their own models by issuing \counterexps?} \emph{Can we do a better job of designing \counterexps?} Moreover, by establishing connections between popular \counterexp and \advex algorithms, our work opens up the possibility of using insights from adversarial robustness literature to improve the design and development of \counterexp algorithms.

We hope our formal analysis helps carve a path for more robust approaches to \counterexps, a critical aspect for calibrating trust in ML. Improving our theoretical bounds using other strategies and deriving new theoretical bounds for other approaches is an interesting future direction. 

\bibliographystyle{plainnat}
\bibliography{main.bib}

\clearpage
\appendix
\vspace{-5cm}
\section*{Appendix Summary} 
Section \ref{app:summary} provides a categorization of counterfactual explanation and adversarial example methods. In Section \ref{app:theory}, we provide detailed proofs for Lemmas \ref{lemma1} and \ref{lemma2}, and Theorems \ref{thm1} and \ref{thm2}. In Section \ref{app:expts}, we provide implementation details for all models used in our experiments including (i) the supervised learning models, (ii) the counterfactual explanation and adversarial example methods, and the (iii) generative models required to run the manifold-based methods. Finally, in Section \ref{app:additional_experiments}, we present the remaining experiments we referred to in the main text.

\section{Taxonomy of Counterfactual and Adversarial Example Methods}\label{app:summary}
In order to choose methods to compare across \counterexpmethods and \advexmethods, we surveyed existing literature. We use a taxonomy to categorize each subset of methods based on various factors. The main characteristics we use are based on type of method, based on widely accepted terminology and specific implementation details. In particular, we distinguish between i) constraints imposed for generating \advexs or \counterexps, ii) algorithms used for generating them. For the class of \advexmethods, we further distinguish between \emph{poisoning attacks} and \emph{evasion attacks} and note that evasion attacks are most closely related to \counterexpmethods. The taxonomy for \counterexpmethods is provided in Table~\ref{tab:rec_tax} and that for \advexmethods is provided in Table~\ref{tab:adv_tax}. 

The main algorithm types used for \counterexpmethods are search-based, gradient-based and one method that uses integer programming~\citep{Ustun_2019}. The main constraints considered are actionability i.e., only certain features are allowed to change, and \counterexps are encouraged to be realistic using either causal and/or manifold constraints. Similarly, for \advexmethods primarily, Greedy search-based and gradient-based methods are most common. Manifold constraints are also imposed in a few cases where the goal is to generate adversaries close to the data-distribution. Based on this taxonomy, we select the appropriate pairs of \counterexpmethod and \advexmethod to compare to each other for theoretical analysis. This leads us to compare gradient-based methods \scfe and \cw attack, \scfe and \deepfool and finally, manifold-based methods \chvae and \zhao with their search-based algorithms.
{
\begin{table}[h!]
\caption{Taxonomy of \counterexpmethods}
\label{tab:rec_tax}
\centering
\scalebox{1}{
\begin{tabular}{@{}lll@{}}
\toprule
Algorithm & Constraints & Method \\ \midrule
Search-based & \begin{tabular}[c]{@{}l@{}} Causal, Actionability \\ Manifold, Actionability\end{tabular} & \begin{tabular}[c]{@{}l@{}}  \textbf{MINT} \citep{karimi2020algorithmic} \\ \textbf{C-CHVAE} \citep{pawelczyk2020learning}\end{tabular}  \\ \midrule
Gradient-based & \begin{tabular}[c]{@{}l@{}} Actionability \\ Manifold, Actionability \end{tabular} & \begin{tabular}[c]{@{}l@{}} \textbf{CFE, SCFE} \cite{wachter2017counterfactual}\\ \textbf{REVISE} \citep{joshi2019towards} \end{tabular}\\ \midrule
Integer-programming & Actionability/Linear black-box & \textbf{AR} \citep{Ustun_2019} \\ \bottomrule
\end{tabular}
}
\end{table}
}
\begin{table}[h!]
\caption{Taxonomy of \advexmethods}
\label{tab:adv_tax}
\centering
\scalebox{1}{
\begin{tabular}{l@{}lll@{}}
\toprule
& Algorithm & Constraints & Method \\ \midrule
\multirow{2}{2cm}{\textbf{Poisoning Attacks}} & Greedy Search & Manifold & {\bf{Adv.\ Data Poisoning}} \citep{tavallali2021adversarial} \\
& Gradient-based & Data-domain & {\bf{SVM-attack}} \citep{biggio2012poisoning} \\
& & & {\bf{One-Shot Kill}} \citep{shafahi2018poison} \\ \midrule
\multirow{2}{2cm}{\textbf{Evasion Attacks}} & Search-based & Manifold & {\bf{NAE}}~\citep{zhao2018generating} \\
& Gradient-based & Data-domain & {\bf{DeepFool}}~\citep{moosavi2016deepfool} \\ & & & 
{\bf{C\&W Attack}}~\citep{carlini2017towards}  \\ \bottomrule
\end{tabular}
}
\end{table}

\section{Proofs for Section~\ref{sec:compare}}\label{app:theory}
\subsection{Proof of \lemref{lemma1}}\label{app:lemma1}
\begin{replemma}{lemma1}
For a linear score function $f(x) = \bw^{\top}\bx+b$, the \textbf{SCFE} counterfactual for $\bx$ on $f$ is $\bx'=\bx+\delta^{*}$ where 
\begin{equation*}
\delta^{*} =(\mathbf{\bw \bw^T} + \lambda \mathbf{I})^{-1}(s-\mathbf{w}^{T}\mathbf{x}-b)\bw.
\end{equation*}
\label{SCFE_closedform2}
\end{replemma}
\begin{proof}
 Reformulating Equation \ref{scfeobj} using $l_{2}$-norm as the distance metric, we get:
\begin{align*}
    & \min_{\mathbf{x'}}(\mathbf{w}^{T}\mathbf{x'}+b-s)^{2}+\lambda||\mathbf{x'} - \mathbf{x}||_{2}^{2}.
\end{align*}
We can convert this minimization objective into finding the minimum perturbation $\delta$ by substituting $\mathbf{x'}=\mathbf{x}+\delta$, \ie
\begin{equation}
    \min_{\delta}(\mathbf{w}^{T}\mathbf{x} + \mathbf{w}^{T}\delta + b - s)^{2}+\lambda||\mathbf{x}' - \mathbf{x}||_{2}^{2}.
\end{equation}
Using $s-\mathbf{w}^{T}\mathbf{x}-b= m$ as a dummy variable and $\mathbf{x}'-\mathbf{x}=\delta$, we get:
\begin{align*}
    & \min_{\delta}(\mathbf{w}^{T}\delta-m)^{2}+\lambda||\delta||_{2}^{2}\\
    & \min_{\delta} (\mathbf{w}^{T}\delta-m)^{T}(\mathbf{w}^{T}\delta-m) + \lambda\delta^{T}\delta\\
    & \min_{\delta}(\delta^{T}\mathbf{w}-m)(\mathbf{w}^{T}\delta-m) + \lambda\delta^{T}\delta \tag{$m$ is a scalar, hence $m^{T}=m$}\\
    & \min_{\delta}\delta^{T}\mathbf{w}\mathbf{w}^{T}\delta -2m\delta^{T}\mathbf{w} + m^{2} + \lambda\delta^{T}\delta \\
    & \min_{\delta}\delta^{T}(\mathbf{w}\mathbf{w}^{T} + \lambda\mathbf{I})\delta -2m\delta^{T}\mathbf{w} + m^{2} \\
    & \min_{\delta}\delta^{T}\mathbf{M}\delta -2m\mathbf{w}^{T}\delta + m^{2} \tag{where $\mathbf{M}=\mathbf{w}\mathbf{w}^{T} + \lambda\mathbf{I}$} \\
    & \min_{\delta}\delta^{T}\mathbf{M}\delta -2\eta^{T}\delta + m^{2} \tag{where $m\mathbf{w} = \eta$}\\
    & \min_{\delta}\delta^{T}\mathbf{M}\delta -2\eta^{T}\delta + \eta^{T}\mathbf{M}^{-1}\eta - \eta^{T}\mathbf{M}^{-1}\eta + m^{2} \\
    & \min_{\delta} (\delta - \mathbf{M}^{-1}\eta)^{T}\mathbf{M}(\delta - \mathbf{M}^{-1}\eta) - \eta^{T}\mathbf{M}^{-1}\eta + m^{2}
\end{align*}
The closed form solution is given by,
\begin{equation}
    \delta^{*}=\mathbf{M}^{-1}\eta,
    \label{eq:pert_wachter}
\end{equation}
where $M= \bw\bw^T + \lambda\mathbf{I}$.


The expression in \eqref{eq:pert_wachter} can further be simplified:
\begin{align}
    \delta^* &= m \bigg(\mathbf{I} - \frac{\bw \bw^T}{\lambda + \lVert \bw \rVert_2^2} \bigg) \bw && (\text{Sherman-Morrison Formula}) \notag \\
    &= m \bigg( \mathbf{I} \bw - \bw \frac{\lVert \bw \rVert_2^2}{\lambda + \lVert \bw \rVert_2^2} \bigg) \notag \\
    & = m \cdot \frac{\lambda}{\lambda + \lVert \bw \rVert_2^2} \cdot \bw, \label{eq:new_perturbation}
\end{align}
where $m:= s - \bw^T \bx - b$.
We can also solve for the optimal hyperparamter $\lambda$:
\begin{align*}
    s &= b + \bw^T (x + \delta^*) \tag{$\delta^*$ leads to target $s$} \\
      &= b + \bw^T x +  \bw^T \delta^* \\
    \iff m &= \bw^T \delta^* \\
   \Rightarrow m &=  m \cdot \frac{\lambda \cdot \lVert \bw \rVert_2^2}{\lambda + \lVert \bw \rVert_2^2} && \tag{\text{insert \eqref{eq:new_perturbation}}}\\
    \lambda^* &= \frac{\lVert \bw \rVert_2^2}{\lVert \bw \rVert_2^2 - 1}
\end{align*}
Notice, that the optimal hyperparameter can also be negative, if $0< \lVert \bw \rVert_2^2 < 1$!
Finally, substituting into \eqref{eq:new_perturbation} yields:
\begin{equation}
    \delta^{**} = \frac{m}{\lVert \bw \rVert_2^2} \cdot \bw.
\end{equation}
\end{proof}

\subsection{Proof of \lemref{lemma2}}\label{app:lemma2}
\begin{lemma}\label{lemma2}
For a binary classifier $h(\bx)=g(f(\bx))$ such that $f(\bx)=\bw^{\top}\bx+b$, $g(\bx)=\sigma(\bx)$, and $h(\bx)$ is the probability that $\bx$ is in the class $y=1$,
\[\ell^{*}(\bx) = \max(0,-2(\bw^{\top}\bx+b))\].
\end{lemma}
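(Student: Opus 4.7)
The plan is to evaluate $\ell^{*}(\bx) = \max\bigl(0,\max_{i} f(\bx)_{i} - f(\bx)_{y}\bigr)$ directly under the binary sigmoid model of the lemma, taking $y=1$ as the target class. This is the natural choice in the paper's setting, since the \cw attack (paired with \scfe in Section~\ref{subsec:cw_cfe}) is being used to produce a favorable prediction from an unfavorable one (flipping $y=0$ to $y=1$), and the CW loss $\ell^{*}$ is designed so that it equals zero exactly when the model assigns the target label $y$ and is positive otherwise.

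First I would pin down what $f(\bx)_{i}$ means for a single-logit sigmoid classifier. Since $h(\bx) > 1/2 \iff f(\bx) > 0$, the natural per-class confidence score is $f(\bx)_{1} = f(\bx) = \bw^{\top}\bx + b$ for class $1$ and $f(\bx)_{0} = -f(\bx)$ for class $0$, so that $f(\bx)_{i}$ is positive precisely when the model predicts class $i$ and negative otherwise. Substituting these into the CW formula with $y=1$ collapses the inner maximum to
\[
\ell^{*}(\bx) \;=\; \max\bigl(0,\; \max(f(\bx),-f(\bx)) - f(\bx)\bigr) \;=\; \max\bigl(0,\; |f(\bx)| - f(\bx)\bigr).
\]

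A one-line case split then finishes the proof: when $f(\bx) \geq 0$, the quantity $|f(\bx)| - f(\bx)$ vanishes, so $\ell^{*}(\bx) = 0$; when $f(\bx) < 0$, it equals $-2f(\bx) > 0$. Both branches are captured by $\max\bigl(0,\,-2f(\bx)\bigr) = \max\bigl(0,\,-2(\bw^{\top}\bx + b)\bigr)$, which is the claimed identity. The only real obstacle is fixing the convention for reading $f(\bx)_{i}$ off a scalar sigmoid scoring function, since alternative interpretations such as $(f(\bx)_{0}, f(\bx)_{1}) = (0, f(\bx))$ would yield $\max(0,-f(\bx))$ instead of the factor of $2$; hence the substantive step is justifying the symmetric $(\pm f(\bx))$ parameterization consistent with the lemma's stated form, after which the remaining algebra is immediate.
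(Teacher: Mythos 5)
Your proposal is correct and follows essentially the same route as the paper: the paper likewise identifies the class-$1$ score as $f(\bx)$ and the class-$0$ score as $-f(\bx)$ (derived there explicitly as the log-odds $\ln\frac{1-h(\bx)}{h(\bx)}$ under the sigmoid), then substitutes into $\ell^{*}$ with target $y=1$ to get $\max(0,-2(\bw^{\top}\bx+b))$. Your explicit case split via $\max\bigl(0,|f(\bx)|-f(\bx)\bigr)$ is in fact slightly more careful than the paper's step, which tacitly replaces $\max_{i}f(\bx)_{i}$ by $-f(\bx)$ and relies on the outer $\max(0,\cdot)$ to absorb the $f(\bx)>0$ case.
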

\begin{proof}
Given our formulation of $h(\bx)$, $f(\bx)$ is the score corresponding to class $y=1$. By the definition of $\sigma(\bx)$, \[f(\bx)=\ln{\frac{h(\bx)}{1-h(\bx)}}=\ln{h(\bx)}-\ln{(1-h(\bx))}\]
Then the score corresponding to the class $y=0$ is 
\[\ln{\frac{1-h(\bx)}{1-(1-h(\bx))}}=\ln{\frac{1-h(\bx)}{h(\bx)}}=\ln{(1-h(\bx))}-\ln{h(\bx)}=-f(\bx)\]
Substituting back into definition of $\ell^{*}(\bx)$,
\begin{align*}
    \ell^{*}(\bx) &= \max(0,\max_{i}(f(\bx)_{i})-f(\bx)_{y})\\
    &= \max(0,(-f(\bx)-f(\bx))\\
    &= \max(0,(-2f(\bx))\\
    &= \max(0,-2(\bw^{\top}\bx+b)).
\end{align*}
\end{proof}

\subsection{Proof of \thref{thm1}}\label{app:thm1}

\begin{reptheorem}{thm1}
For a linear classifier $h(\bx) = g(f(\bx))$ such that $f(\bx) = \bw^T\bx + b$, the difference between the \textbf{SCFE} counterfactual example $\mathbf{x}_{\text{SCFE}}$ and the \textbf{C\&W} adversarial example $\mathbf{x}_{\text{CW}}$ using the recommended loss function $\ell^{*}(\cdot)= \max(0,\max_{i}(f(\bx)_{i})-f(\bx)_{y})$ is given by:
\[\lVert\mathbf{x}_{\text{SCFE}} - \mathbf{x}_{\text{CW}}\rVert_{p} \leq \biggl\lVert\frac{1}{\lambda} \bigg(\mathbf{I} - \frac{\mathbf{w}\mathbf{w}^{T}}{\lambda+\mathbf{w}^{T}\mathbf{w}}\bigg)(s-f(\mathbf{x})) - c\mathbf{I} \biggr\rVert_{p}~\lVert\mathbf{w}\rVert_{p}\].
\end{reptheorem}
\begin{proof}
Consider a binary classifier $h(\bx)=g(f(\bx))$ such that $f(\bx) =\bw^{\top}\bx + b$, $g(\bx)=\sigma(\bx)$, and $h(\bx)$ is the probability that $\bx$ is in the class $y=1$. Then by \lemref{lemma2} and using $\ell_2$-nrom as the distance metric, we can write the \textbf{C\&W Attack} objective as
\[\arg\min_{\bx'} c\max(0,-2(\bw^{\top}\bx'+b))+\lVert\bx- \bx'\rVert_{2}^{2}\]
We can convert this minimization objective into finding the minimum perturbation $\delta$ by substituting $\mathbf{x'}=\mathbf{x}+\delta$,
\[\arg\min_{\delta} c\max(0,-2(\bw^{\top}\bx+\bw^{\top}\delta+b))+\lVert\delta\rVert_{2}^{2}\]
The subgradients of this objective are
\[ \begin{cases} 
      2\delta & \mbox{when } -2(\bw^{\top}\bx+\bw^{\top}\delta+b)<0 \\
      -2c\bw + 2\delta & \mbox{otherwise}
   \end{cases}
\]
By \lemref{lemma2},
$-2(\bw^{\top}\bx+\bw^{\top}\delta+b)=-f(\bx)-f(\bx)<0$. This implies that $f(\bx)>-f(\bx)$, i.e that the score for class $y=1$ is greater than the score for $y=0$. As this indicates an adversarial example has already been found, we focus on minimizing the other subgradient. Setting this subgradient equal to 0,
\begin{align*}
    0 &= -2c\bw + 2\delta\\
    \delta &= c\bw 
\end{align*}
Thus the minimum perturbation to generate and adversarial example using the \textbf{C\&W Attack} is 
\[\delta_{\text{CW}}^{*} = c\bw\] 
Now, taking the difference between the minimum perturbation to generate a \textbf{SCFE} counterfactual (\lemref{lemma1}) and DeepFool (\eqref{eq:pert_df}), we get:
\begin{align*} 
    \delta^{*}_{\text{SCFE}} - \delta^{*}_{\text{CW}} &= (\mathbf{w}\mathbf{w}^{T} + \lambda\mathbf{I})^{-1}(s-\mathbf{w}^{T}\mathbf{x}-b)\mathbf{w} - c\bw\\
    &= \big((\mathbf{w}\mathbf{w}^{T} + \lambda\mathbf{I})^{-1}(s-f(\bx)) - c\mathbf{I}\big)\bw \\
     &= \bigg(\frac{1}{\lambda}\bigg(\mathbf{I} - \frac{\mathbf{w}\mathbf{w}^{T}}{\lambda+\mathbf{w}^{T}\mathbf{w}}\bigg)(s-f(\mathbf{x})) - c\mathbf{I}\bigg) \mathbf{w} \tag{Using Sherman–Morrison formula}
\end{align*}
Taking the $l_{p}$-norm on both sides, we get:
\begin{align*}
    \lVert\delta^{*}_{\text{SCFE}} - \delta^{*}_{\text{CW}}\rVert_{p} &= \biggl\lVert\bigg(\frac{1}{\lambda}\bigg(\mathbf{I} - \frac{\mathbf{w}\mathbf{w}^{T}}{\lambda+\mathbf{w}^{T}\mathbf{w}}\bigg)(s-f(\mathbf{x})) - c\mathbf{I}\bigg) \mathbf{w}\biggr\rVert_{p} \\
    & \leq \biggl \lVert \frac{1}{\lambda}\bigg(\mathbf{I} - \frac{\mathbf{w}\mathbf{w}^{T}}{\lambda+\mathbf{w}^{T}\mathbf{w}}\bigg)(s-f(\mathbf{x})) - c\mathbf{I} \biggr\rVert_{p}~\lVert\mathbf{w}\rVert_{p} \tag{Using Cauchy-Schwartz}
\end{align*}

Adding and subtracting the input instance $\mathbf{x}$ in the left term, we get:
\begin{align*}
    \Vert\mathbf{x} + \delta^{*}_{\text{SCFE}} - (\mathbf{x} + \delta^{*}_{\text{CW}})\Vert_{p} &\leq 
    \biggl \lVert \frac{1}{\lambda}\bigg(\mathbf{I} - \frac{\mathbf{w}\mathbf{w}^{T}}{\lambda+\mathbf{w}^{T}\mathbf{w}}\bigg)(s-f(\mathbf{x})) - c\mathbf{I} \biggr\Vert_{p}~ \lVert\mathbf{w} \rVert_{p} \\
    \lVert\mathbf{x}_{\text{SCFE}} - \mathbf{x}_{\text{CW}}\rVert_{p} &\leq \biggl\lVert\frac{1}{\lambda}\bigg(\mathbf{I} - \frac{\mathbf{w}\mathbf{w}^{T}}{\lambda+\mathbf{w}^{T}\mathbf{w}}\bigg)(s-f(\mathbf{x})) - c\mathbf{I} \biggr\Vert_{p}~\Vert\mathbf{w}\Vert_{p},
\end{align*}
where the final equation gives an upper bound on the difference between the \textbf{SCFE} counterfactual and the \textbf{C\&W} adversarial example.

Furthermore, we ask under which conditions the normed difference becomes 0. We start with:
\begin{align*} 
    \delta^{*}_{\text{SCFE}} - \delta^{*}_{\text{CW}} &= m \cdot \frac{\lambda}{\lambda + \lVert \bw \rVert_2^2} \cdot \bw - c \cdot \bw
\end{align*}
Taking the $l_{p}$-norm on both sides, we get:
\begin{align*} 
    \lVert \delta^{*}_{\text{SCFE}} - \delta^{*}_{\text{CW}} \rVert_p &= \biggl| m \cdot \frac{\lambda}{\lambda + \lVert \bw \rVert_2^2} - c \biggr| \cdot \lVert \bw \Vert_p
\end{align*}
If we were to use the optimal hyperparameters we would get:
\begin{align*}
\lVert \delta^{**}_{\text{SCFE}} - \delta^{*}_{\text{CW}} \rVert_p =  \biggl| \frac{m - c \cdot \lVert \bw \rVert_2^2}{\lVert \bw \rVert_2^2} \biggr| \cdot \lVert \bw \rVert_p,
\end{align*}
where equality holds when the hyperparameter is chosen so that $c:= \frac{m}{ \lVert \bw \rVert_2^2}.$
\end{proof}

\subsection{Proof of~\thref{thm2}}\label{app:thm2}

\begin{reptheorem}{thm2}
For a linear classifier $h(\bx) = g(f(\bx))$ such that $f(\bx) = \bw^T\bx + b$, the difference between the counterfactual example $\mathbf{x}_{\text{SCFE}}$ generated by~\citet{wachter2017counterfactual} and the adversarial example $\mathbf{x}_{\text{DF}}$ generated by \citet{moosavi2016deepfool} is given by:
\begin{equation}
    ||\mathbf{x}_{\text{SCFE}} - \mathbf{x}_{\text{DF}}||_{p} \leq \biggl \lVert \frac{1}{\lambda} \bigg(\mathbf{I} - \frac{\mathbf{w}\mathbf{w}^{T}}{\lambda+\mathbf{w}^{T}\mathbf{w}} \bigg)(s-f(\mathbf{x})) + \bigg(\bI \frac{f(\mathbf{x})}{\lVert \mathbf{w} \rVert_{2}^{2}}\bigg) \biggr \rVert_{p} ~ \cdot ||\mathbf{w}||_{p},
    \label{eq:w_df_bound2}
\end{equation}
\end{reptheorem}

\begin{proof}

The minimal perturbation to change the classifier’s decision for a binary model $f(\mathbf{x})$ is given by the closed-form formula \citep{moosavi2016deepfool}:
\begin{equation}
    \delta^{*}_{\text{DF}} = -\frac{f(\mathbf{x})}{||\mathbf{w}||_{2}^{2}} \mathbf{w}
    \label{eq:pert_df}.
\end{equation}

Now, taking the difference between the minimum perturbation added to an input instance $\mathbf{x}$ by Wachter algorithm (\lemref{lemma1}) and DeepFool (\eqref{eq:pert_df}), we get:
\begin{align*} 
    & \delta^{*}_{\text{SCFE}} - \delta^{*}_{\text{DF}} = (\mathbf{w}\mathbf{w}^{T} + \lambda\mathbf{I})^{-1}(s-\mathbf{w}^{T}\mathbf{x}-b)\mathbf{w} - \bigg( -\frac{f(\mathbf{x})}{||\mathbf{w}||_{2}^{2}} \mathbf{w} \bigg)\\
    & \delta^{*}_{\text{SCFE}} - \delta^{*}_{\text{DF}} = \bigg((\mathbf{w}\mathbf{w}^{T} + \lambda\mathbf{I})^{-1}(s-f(\mathbf{x})) + \frac{f(\mathbf{x})}{||\mathbf{w}||_{2}^{2}}\bigg) \mathbf{w}\\
    & \delta^{*}_{\text{SCFE}} - \delta^{*}_{\text{DF}} = \bigg(\frac{1}{\lambda}\bigg(\mathbf{I} - \frac{\mathbf{w}\mathbf{w}^{T}}{\lambda+\mathbf{w}^{T}\mathbf{w}}\bigg)(s-f(\mathbf{x})) + \frac{f(\mathbf{x})}{||\mathbf{w}||_{2}^{2}}\bigg) \mathbf{w} \tag{Using Sherman–Morrison formula}
\end{align*}
Taking the $l_{p}$-norm on both sides, we get:
\begin{align*}
    \lVert\delta^{*}_{\text{SCFE}} - \delta^{*}_{\text{DF}}\Vert_{p} &= \biggl\Vert\bigg(\frac{1}{\lambda}\bigg(\mathbf{I} - \frac{\mathbf{w}\mathbf{w}^{T}}{\lambda+\mathbf{w}^{T}\mathbf{w}}\bigg)(s-f(\mathbf{x})) + \bI \frac{f(\mathbf{x})}{\Vert\mathbf{w}\Vert_{2}^{2}}\bigg) \mathbf{w} \biggr \Vert_{p}\\
    & \leq \biggl\Vert\frac{1}{\lambda}\bigg(\mathbf{I} - \frac{\mathbf{w}\mathbf{w}^{T}}{\lambda+\mathbf{w}^{T}\mathbf{w}}\bigg)(s-f(\mathbf{x})) + \bI \frac{f(\mathbf{x})}{\Vert\mathbf{w}\Vert_{2}^{2}} \biggr\Vert_{p}~\Vert\mathbf{w}\Vert_{p} \tag{Using Cauchy-Schwartz}
\end{align*}

Adding and subtracting the input instance $\mathbf{x}$ in the left term, we get:
\begin{align*}
    \Vert\mathbf{x} + \delta^{*}_{\text{SCFE}} - (\mathbf{x} + \delta^{*}_{\text{DF}})\Vert_{p} & \leq \biggl\Vert\frac{1}{\lambda}\bigg(\mathbf{I} - \frac{\mathbf{w}\mathbf{w}^{T}}{\lambda+\mathbf{w}^{T}\mathbf{w}}\bigg)(s-f(\mathbf{x})) + \bI \frac{f(\mathbf{x})}{\Vert\mathbf{w}\Vert_{2}^{2}} \biggr\Vert_{p}~\Vert\mathbf{w}\Vert_{p}\\
     \Vert\mathbf{x}_{\text{SCFE}} - \mathbf{x}_{\text{DF}}\Vert_{p} &\leq \biggl\Vert\frac{1}{\lambda}\bigg(\mathbf{I} - \frac{\mathbf{w}\mathbf{w}^{T}}{\lambda+\mathbf{w}^{T}\mathbf{w}}\bigg)(s-f(\mathbf{x})) + \bI \frac{f(\mathbf{x})}{\Vert\mathbf{w}\Vert_{2}^{2}} \biggr\Vert_{p}~\Vert\mathbf{w}\Vert_{p},
\end{align*}
where the final equation gives an upper bound on the difference between the \textbf{SCFE} counterfactual and the adversarial example from DeepFool \citep{moosavi2016deepfool}.

Furthermore, we ask under which conditions the normed difference becomes 0.
\begin{align*} 
    \delta^{*}_{\text{SCFE}} - \delta^{*}_{\text{DF}} &= m \cdot \frac{\lambda}{\lambda + \lVert \bw \rVert_2^2} \cdot \bw + \frac{f(\mathbf{x})}{\Vert\mathbf{w}\Vert_{2}^{2}} \mathbf{w} \\
    & = \frac{m\cdot \lambda \cdot \lVert \bw \rVert_{2}^{2} + f(\bx)  \cdot (\lambda + \lVert \bw \rVert_{2}^{2})}{(\lambda + \lVert \bw \rVert_{2}^{2}) \cdot \lVert \bw \rVert_{2}^{2}} \cdot \bw \\
    & = \frac{f(\bx) \cdot \lVert \bw \rVert_2^2 \cdot (1-\lambda) + \lambda \cdot (f(\bx) + s \cdot \lVert \bw \rVert_2^2)}{(\lambda + \lVert \bw \rVert_{2}^{2}) \cdot \lVert \bw \rVert_{2}^{2}} \cdot \bw
\end{align*}
Taking the $l_{p}$-norm on both sides and , we get:
\begin{align*} 
    \lVert \delta^{*}_{\text{SCFE}} - \delta^{*}_{\text{DF}} \rVert_p &= \biggl|
    \frac{f(\bx) \cdot \lVert \bw \rVert_2^2 \cdot (1-\lambda) + \lambda \cdot (f(\bx) + s \cdot \lVert \bw \rVert_2^2)}{(\lambda + \lVert \bw \rVert_{2}^{2}) \cdot \lVert \bw \rVert_{2}^{2}} \biggr| \cdot \Vert\mathbf{w}\Vert_{p}
\end{align*}
If we were to use the optimal hyperparameter $\lambda^*$ we would get:
\begin{align*}
\lVert \delta^{**}_{\text{SCFE}} - \delta^{*}_{\text{DF}} \rVert_p &= \biggl\lVert \frac{-f(\bx) + s}{\lVert \bw \rVert_2^2} \cdot \bw + \frac{-f(\bx)}{\lVert \bw \rVert_2^2} \cdot \bw \biggr \rVert_p 
\\ &=\frac{|s|}{\lVert \bw \rVert_2^2}  \cdot \lVert \bw \rVert_p,
\end{align*}
where equality holds when the target score is chosen so that $s{=}0$, which corresponds to a probability of $Y{=}1$ of $0.5$.

\end{proof}

\subsection{Proof of~\lemref{thm3}}
\label{app:thm3} 

\begin{replemma}{thm3}
Let $\tilde{\bz}_{NAE}$ be the solution returned~\citet[Algorithm 1]{zhao2018generating} and $\tilde{\bz}_C$ the solution returned by the counterfactual search algorithm of~\citet{pawelczyk2020learning} by sampling from $\ell_p$-norm ball in the latent space using an $L$-Lipschitz generative model $\cG_{\theta}(\cdot)$. Analogously, let $\bx_{NAE} = \cG_{\theta}(\tilde{\bz}_{\text{NAE}})$ and $\bx_C = \cG_{\theta}(\tilde{\bz}_C)$ by design of the two algorithms. Let $r_{\text{NAE}}^*$ and $r_C^*$ be the corresponding radius chosed by each algorithm respectively that successfully returns an \advex or \counterexp.  Then, $\|\bx_{\text{NAE}} - \bx_{C}\| \leq L(r_{\text{NAE}}^* + r_C^*)$.
\end{replemma}
\begin{proof}
The proof straightforwardly follows from triangle inequality and $L$-Lipschitzness of the Generative model:
\begin{align}
        \|\bx_{\text{NAE}} - \bx_C\|&= \|\cG_{\theta}(\tilde{\bz}_{\text{NAE}})- \cG_{\theta}(\tilde{\bz}_C)\|_p \\
        &\leq \|\cG_{\theta}(\tilde{\bz}_{\text{NAE}})- \bx + \bx - \cG_{\theta}(\tilde{\bz}_C)\|_p \label{eq:triangleq} \\
        &\leq \|\cG_{\theta}(\tilde{\bz}_{\text{NAE}})- \bx \|_p + \|\bx - \cG_{\theta}(\tilde{\bz}_C)\|_p \\
        &= \|\cG_{\theta}(\tilde{\bz}_{\text{NAE}})- \cG_{\theta}(\bz) \|_p + \|\cG_{\theta}(\bz) - \cG_{\theta}(\tilde{\bz}_C)\|_p \\
        &\leq L\|\tilde{\bz}_{\text{NAE}}- \bz \|_p + L\|\bz - \tilde{\bz}_C\|_p \label{eq:lipschitz}\\ 
        &\leq L \{r_{\text{NAE}}^* + r_C^*\} \label{eq:algo}
\end{align}
where \eqref{eq:triangleq} follows from triangle inequality in the $\ell_p$-norm, \eqref{eq:lipschitz} follows from the Lipschitzness assumption and \eqref{eq:algo} follows from properties of the counterfactual search algorithms.
\end{proof}

In the following we outline a lemma that allows us to estimate the Lipschitz constant of the generative model. This will be used for empirical validation of our theoretical claims.
\begin{lemma}[\citet{bora2017compressed}]
If $G$ is a $d$-layer neural network with at most $c$ nodes per layer, all weights $\leq w_{\max }$ in absolute value, and $M$ -Lipschitz non-linearity after each layer, then $G(\cdot)$ is $L$ -Lipschitz with $L=\left(M c w_{\max }\right)^{d}$.
\label{lemma:lipschitz}
\end{lemma}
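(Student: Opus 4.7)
The plan is to write $G$ explicitly as a composition of affine maps and pointwise nonlinearities, bound the Lipschitz constant of each factor separately, and then multiply. Concretely, I would write
\begin{equation*}
G(\bz) \;=\; \sigma_d \circ A_d \circ \sigma_{d-1} \circ A_{d-1} \circ \cdots \circ \sigma_1 \circ A_1 (\bz),
\end{equation*}
where each $A_i(\bu) = W_i \bu + \bb_i$ is affine with $W_i \in \mathbb{R}^{c_i \times c_{i-1}}$ (with $c_i \leq c$) and $\sigma_i$ is the $M$-Lipschitz nonlinearity applied coordinate-wise. (If biases or the final nonlinearity are absent, the argument is only easier, since constants and identity maps are $1$-Lipschitz.)

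Next I would bound the Lipschitz constant of each affine layer $A_i$ in the $\ell_p$ norm used in the Lemma. The translation by $\bb_i$ is an isometry, so I only need the operator norm of $W_i$. Since every entry of $W_i$ is at most $w_{\max}$ in absolute value and $W_i$ has at most $c$ rows and at most $c$ columns, one can bound
\begin{equation*}
\|W_i\|_{\mathrm{op}} \;\leq\; \|W_i\|_F \;=\; \Bigl(\sum_{j,k} (W_i)_{jk}^2\Bigr)^{1/2} \;\leq\; c\, w_{\max},
\end{equation*}
so $A_i$ is $(c w_{\max})$-Lipschitz. A pointwise $M$-Lipschitz nonlinearity is also $M$-Lipschitz as a map on $\mathbb{R}^{c_i}$, because
\begin{equation*}
\|\sigma_i(\bu) - \sigma_i(\bv)\|_p^p \;=\; \sum_k |\sigma(u_k) - \sigma(v_k)|^p \;\leq\; M^p \sum_k |u_k - v_k|^p \;=\; M^p \|\bu - \bv\|_p^p.
\end{equation*}

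Finally I would invoke the elementary fact that the Lipschitz constant of a composition is at most the product of the individual Lipschitz constants: if $f$ is $L_f$-Lipschitz and $g$ is $L_g$-Lipschitz, then $f \circ g$ is $L_f L_g$-Lipschitz. Applying this $d$ times to the $d$ affine-nonlinear blocks $\sigma_i \circ A_i$, each with Lipschitz constant at most $M \cdot c w_{\max}$, yields
\begin{equation*}
\|G(\bz_1) - G(\bz_2)\|_p \;\leq\; (M c w_{\max})^d \, \|\bz_1 - \bz_2\|_p,
\end{equation*}
which is the claim. The only mildly delicate step is the bound $\|W_i\|_{\mathrm{op}} \leq c w_{\max}$; everything else is routine. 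I do not expect any genuine obstacle, and the proof is self-contained modulo the definitional fact that compositions of Lipschitz maps have Lipschitz constants that multiply.
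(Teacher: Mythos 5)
The paper does not prove this lemma: it is imported verbatim from \citet{bora2017compressed} (their Lemma 8.5) and used only as a black box to estimate the Lipschitz constant $L$ when validating Lemma \ref{thm3} empirically. Your argument is the standard proof of that result and is correct --- decompose $G$ into $d$ affine-plus-nonlinearity blocks, bound each block's Lipschitz constant by $M c w_{\max}$, and multiply. The one step that deserves a caveat is $\|W_i\|_{\mathrm{op}} \leq \|W_i\|_F$, which is valid only for the $\ell_2$ operator norm, whereas the surrounding Definition of $L$-Lipschitzness is stated for a general $\ell_p$ norm; you should either fix $p=2$ (which is all the paper uses in its experiments) or replace the Frobenius bound by the direct estimates $\|W_i\|_{1\to 1} \leq c\, w_{\max}$ and $\|W_i\|_{\infty\to\infty} \leq c\, w_{\max}$ (maximum column and row sums) combined with Riesz--Thorin interpolation, which yields $\|W_i\|_{p\to p} \leq c\, w_{\max}$ for every $p \in [1,\infty]$ and hence the same final constant $(M c w_{\max})^d$.
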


\section{Experimental Setup}\label{app:expts}
\subsection{Implementation Details for Counterfactual Explanation and Adversarial Example Methods}
For all datasets, categorical features are one-hot encoded and data is scaled to lie between $0$ and $1$. We partition the dataset into train-test splits. The training set is used to train the classification models for which \advexs and \counterexps are generated. \advexs and \counterexps are generated for all samples in the test split for the fixed classification model. For \counterexp methods applied to generate recourse examples, all features are assumed actionable for comparison with \advexs methods. \Advexs and \counters are appropriately generated using the prescribed algorithm implementations in each respective method. Specifically,
\begin{asparaenum}
\item[i)] {\bf{SCFE}}: As suggested in~\citet{wachter2017counterfactual}, an Adam optimizer~\citep{kingma2014adam} is used to obtain \counterexps  corresponding to the cost function of \eqref{eq:pert_wachter}. 
\item[ii)] {\bf{C-CHVAE}}: A (V)AE is additionally trained to model the data-manifold as prescribed in~\citet{pawelczyk2020learning}. As suggested in~\citet{pawelczyk2020learning}, a counterfactual search algorithm in the latent space of the (V)AEs. Particularly, a latent sample within an $\ell_p$-norm ball with a fixed search radius is used until a counterfactual example is successfully obtained. The search radius of the norm ball is increased until a \counterexp is found. The architecture of the generative model is provided in Appendix~\ref{app:vae_architecture}. 
\item[iv)] {\bf{C\&W Attack}}: As prescribed in~\citet{carlini2017towards}, we use gradient-based optimization to find Adversarial Examples using this attack. 
\item[v)] {\bf{DeepFool}}: We implement~\citet[Algorithm 1]{moosavi2016deepfool} to generate Adversarial Examples using DeepFool.
\item[vi)] {\bf{\zhao}}: This method trains a generative model and an inverter to generate Adversarial Examples. For consistency of comparison with {\bf{C-CHVAE}}, we use the decoder of the same (V)AE as the generative model for this method. The inverter then corresponds to the encoder of the (V)AE. We use~\citet[Algorithm 1]{zhao2018generating} which uses an iterative search method to find natural adversarial examples. The algorithm searches for \advexs in the latent space with radius between $(r, r+\Delta r]$. The search radius is iteratively increased until an Adversarial Example is successfully found.
\end{asparaenum} 
We describe architecture and training details for real-world data sets in the following.

\subsection{Supervised Classification Models}

All models are implemented in PyTorch and use a $80-20$ train-test split for model training and evaluation. We evaluate model quality based on the model accuracy. All models are trained with the same architectures across the data sets:

\begin{table}[H]
\centering
\begin{tabular}{@{}lll@{}}
\toprule
                         & Neural Network                       & Logistic Regression        \\ \midrule
Units                    & [Input dim.\ , 18, 9, 3, 1] & [Input dim.\ , 1] \\
Type                     & Fully connected                      & Fully connected            \\
Intermediate activations & ReLU                                & N/A                        \\
Last layer activations   & Sigmoid                              & Sigmoid                    \\ \bottomrule
\end{tabular}
\caption{Classification model details}
\label{tab:model_dets}
\end{table}

  


\begin{table}[H]
\centering
\begin{tabular}{@{}l|llll@{}}
\toprule
              &                                                               & Adult & COMPAS & German Credit \\ \midrule
Batch-size    & NN                                                            & 512   & 32     & 64            \\ \cmidrule(l){2-5} 
              & \begin{tabular}[c]{@{}l@{}}Logistic\\ Regression\end{tabular} & 512   & 32     & 64            \\ \midrule
Epochs        & NN                                                            & 50    & 40     & 30            \\ \cmidrule(l){2-5} 
              & \begin{tabular}[c]{@{}l@{}}Logistic\\ Regression\end{tabular} & 50    & 40     & 30            \\ \midrule
Learning rate & NN                                                            & 0.002 & 0.002  & 0.001         \\ \cmidrule(l){2-5} 
              & \begin{tabular}[c]{@{}l@{}}Logistic\\ Regression\end{tabular} & 0.002 & 0.002  & 0.001         \\ \bottomrule
\end{tabular}
\caption{Training details}
\label{tab:train_dets}
\end{table}

\begin{table}[H]
\centering
\begin{tabular}{@{}llll@{}}
\toprule
 & Adult & COMPAS & German Credit \\ \midrule
Logistic Regression & 0.83  & 0.84  & 0.71  \\
Neural Network & 0.84  & 0.85  & 0.72  \\ \bottomrule
\end{tabular}
\caption{Performance of models used for generating \advexs and \counterexps}
\label{tab:classifier_acc}
\end{table}

\subsection{Generative model architectures used for \chvae and \zhao}\label{app:vae_architecture}
For the results in Lemma \ref{lemma2}, we used linear encoders and decoders. For the remaining experiments, we use the following architectures.

\begin{table}[H]
\centering
\begin{tabular}{@{}llll@{}}
\toprule
& Adult                      & COMPAS  & German Credit       \\ \midrule
Encoder layers                 & [input dim, 16, 32, 10] & [input dim, 8, 10, 5] & [input dim, 16, 32, 10]  \\
Decoder layers                  & [10, 16, 32, input dim] & [5, 10, 8, input dim] & [10, 16, 32, input dim] \\
Type                     & Fully connected                      & Fully connected & Fully connected \\
Intermediate activations & ReLU                                & ReLU      & ReLU                  \\
Loss function   & MSE                             & MSE  & MSE                 \\ \bottomrule
\end{tabular}
\caption{Autoencoder details}
\label{tab:model_dets}
\end{table}
%

\section{Additional Empirical Evaluation}\label{app:additional_experiments}
\subsection{Remaining Empirical Results from Section \ref{sec:results}}
In Table \ref{tab:rank_german}, we show the remaining results on the German Credit data pertaining to the Spearman rank correlation experiments, while Figure \ref{fig:dmatch_credit_logistic} depicts the remaining $d_{\text{match}}$ results for the German Credit data set on the logistic regression classifier.

\begin{table*}[h!]
\centering
\scalebox{0.85}{
\begin{tabular}{lccccc} 
\toprule  
& \multicolumn{4}{c}{German Credit} \\
\cmidrule(lr){2-5}
& \multicolumn{2}{c}{LR} & \multicolumn{2}{c}{ANN} \\
\cmidrule(lr){2-3} \cmidrule(lr){4-5}
Model &      SCFE &       CCHVAE &  SCFE &       CCHVAE \\ 
\cmidrule(lr){1-1} \cmidrule(lr){2-2} \cmidrule(lr){3-3} \cmidrule(lr){4-4} \cmidrule(lr){5-5} 
CW &  $\mathbf{0.92 \pm   0.04}$ & $0.52 \pm 0.08$ & $\mathbf{0.98 \pm 0.02}$ & $0.72 \pm 0.13$  \\ 
DF &  $\mathbf{0.92 \pm  0.04}$ & $0.57 \pm 0.08$ & $0.97 \pm 0.02$ & $0.72 \pm 0.13$ \\  
NAE &  $0.44 \pm  0.11$ & $\mathbf{0.99 \pm 0.01}$ & $0.71 \pm 0.19$ & $\mathbf{0.99 \pm 0.01}$  \\ 
\bottomrule 
\end{tabular}
}
\caption{Average Spearman rank correlation between counterfactual perturbations and adversarial perturbations. For every input $\bx$, we compute the corresponding adversarial perturbation $\delta_{\text{AE}}$ and the counterfactual perturbation $\delta_{\text{CE}}$. We then compute the rank correlation of $\delta_{\text{AE}}$ and $\delta_{\text{CE}}$ and report their means. The maximum rank correlation is obtained for methods that belong to the same categories (gradient based vs.\ manifold-based).}
\label{tab:rank_german}
\end{table*}

\begin{figure}[!htbp]
\vspace{-0.1in}
\begin{subfigure}[b]{\textwidth}
         \centering
         \includegraphics[scale=0.5]{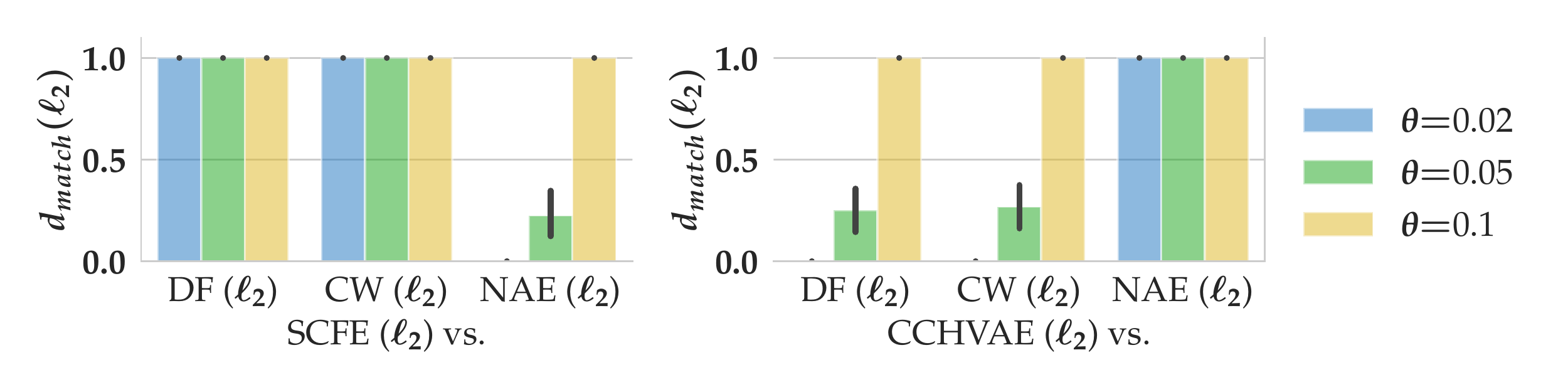}
\end{subfigure}
\caption{Analyzing to what extent different \counterexpmethods and \advexmethods are empirically equivalent for the logistic regression classifier with German Credit data. We compute $d_{\text{match}}$ from \eqref{eq:eval_measures2} with varying thresholds $\theta = \{0.02, 0.05, 0.1\}$. Missing bars indicate that there was no match. 
}
\label{fig:dmatch_credit_logistic}
\end{figure}
We also include results for Neural Networks in Appendix~\ref{app:ann_5pt2}.

\subsection{Empirical Evaluation with ANN}\label{app:ann_5pt2}

\begin{figure}[H]
\begin{subfigure}[b]{\textwidth}
         \centering
         \includegraphics[scale=0.5]{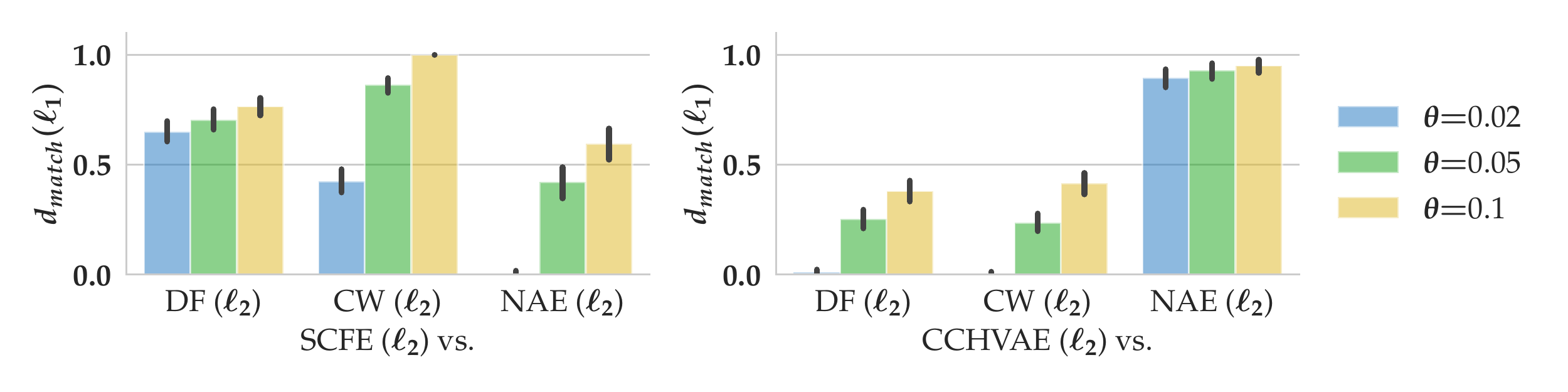}
         \caption{Adult}
         \label{fig:dmatch_adult_ann}
\end{subfigure}
\vfill
\begin{subfigure}[b]{\textwidth}
         \centering
         \includegraphics[scale=0.5]{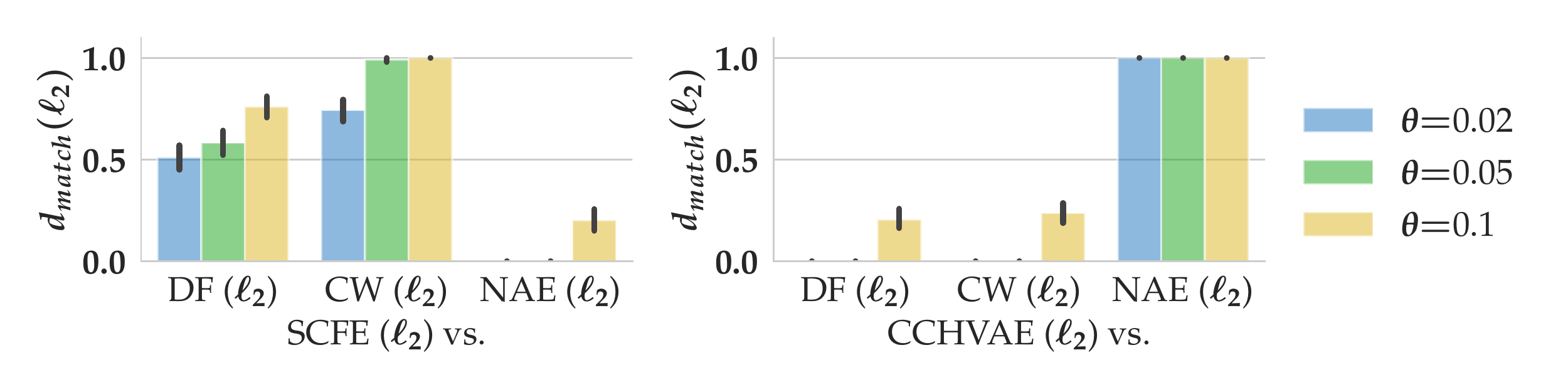}
         \caption{COMPAS}
         \label{fig:dmatch_compas_ann}
\end{subfigure}
\vfill
\begin{subfigure}[b]{\textwidth}
         \centering
         \includegraphics[scale=0.5]{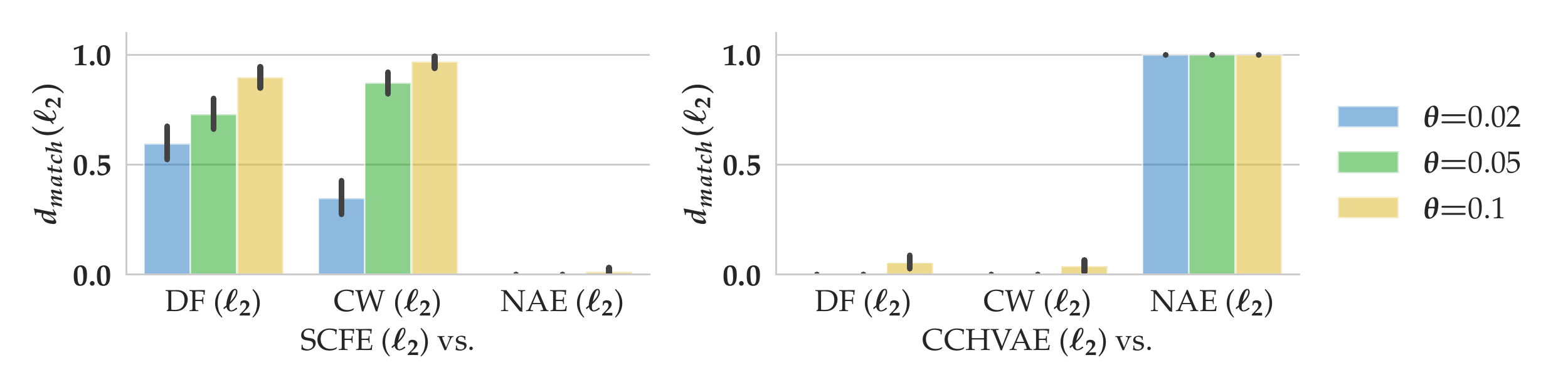}
         \caption{German Credit}
         \label{fig:dmatch_credit_ann}
\end{subfigure}
\caption{Analyzing to what extent different \counterexps and \advexs are empirically equivalent for the 2-layer ANN classifier. To do that, we compute $d_{match}$ from \eqref{eq:eval_measures2} with varying thresholds $\theta = \{0.02, 0.05, 0.1\}$. Missing bars indicate that there was no match.
}
\end{figure}

\begin{figure}[!htbp]
\begin{subfigure}[b]{\textwidth}
         \centering
         \includegraphics[scale=0.5]{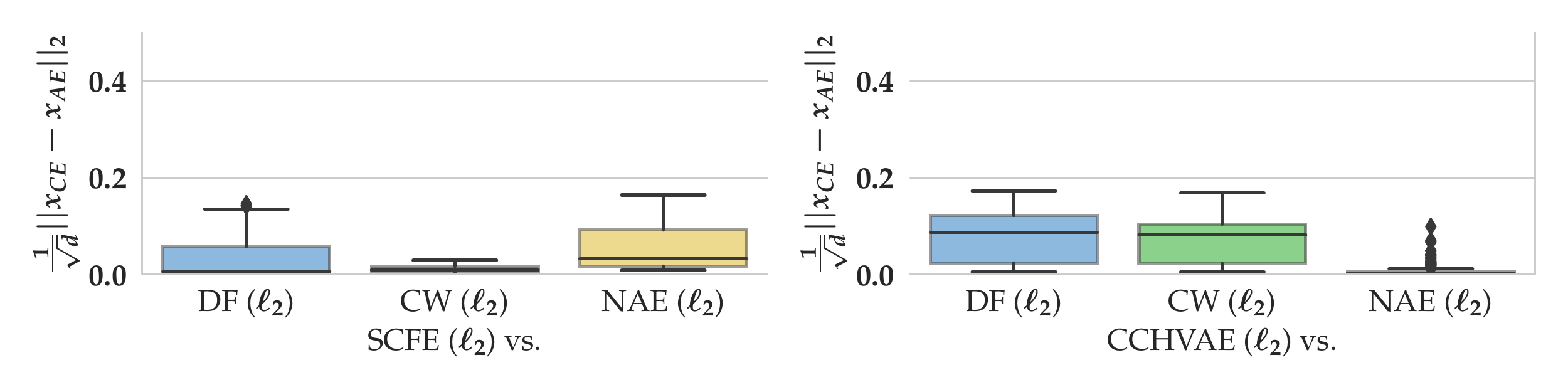}
         \caption{Adult}
         \label{fig:mad_adult_logistic_1n}
\end{subfigure}
\vfill
\begin{subfigure}[b]{\textwidth}
         \centering
         \includegraphics[scale=0.5]{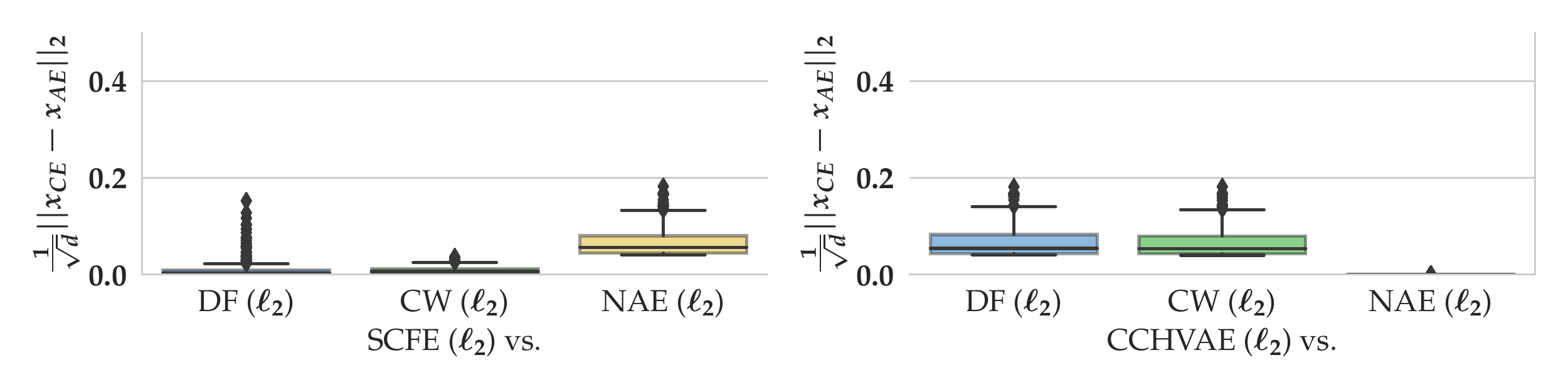}
         \caption{COMPAS}
         \label{fig:mad_compas_logistic_1n}
\end{subfigure}
\vfill
\begin{subfigure}[b]{\textwidth}
         \centering
         \includegraphics[scale=0.5]{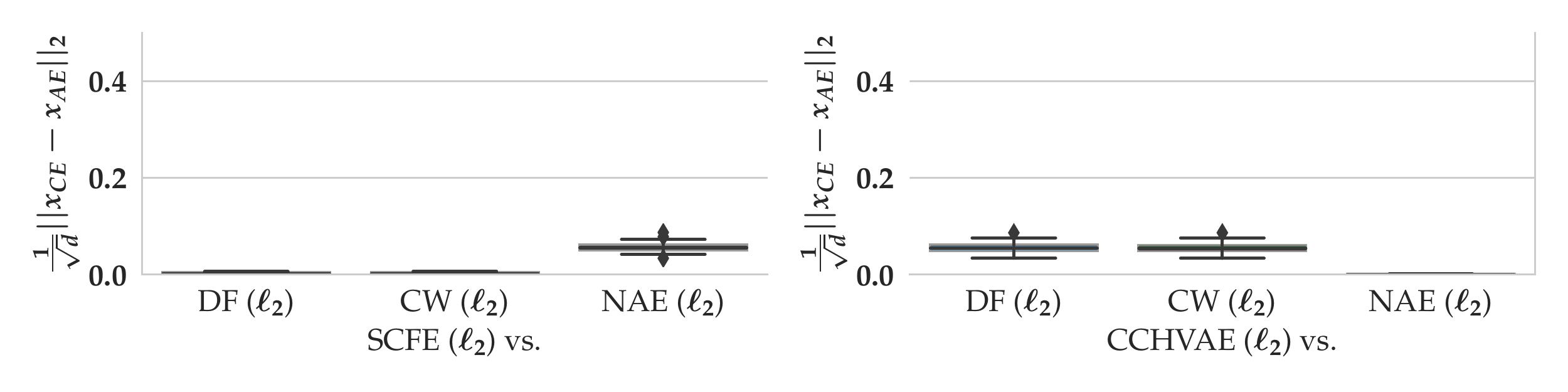}
         \caption{German Credit}
         \label{fig:mad_credit_logistic_1n}
\end{subfigure}
\caption{Distribution of instance wise norm comparisons for the logistic regression model. We show the distribution of cost comparisons across negatively predicted instances ($\hat{y} = 0$) for which we computed \advexs and \counterexps.}
\label{fig:l1_all_logistic}
\end{figure}

\begin{figure}[!htbp]
\begin{subfigure}[b]{\textwidth}
         \centering
         \includegraphics[scale=0.5]{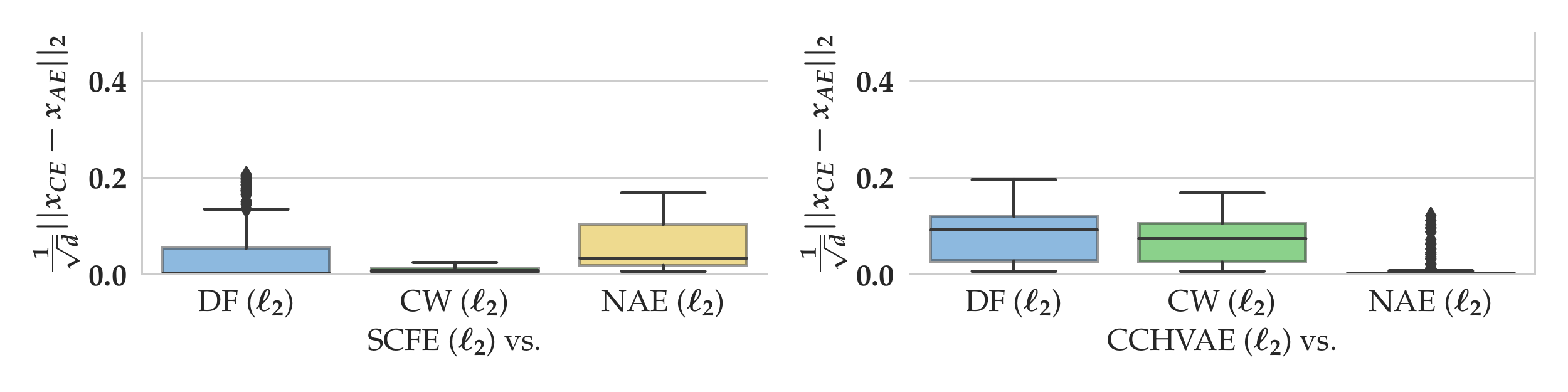}
         \caption{Adult}
         \label{fig:mad_adult_ann_1n}
\end{subfigure}
\vfill
\begin{subfigure}[b]{\textwidth}
         \centering
         \includegraphics[scale=0.5]{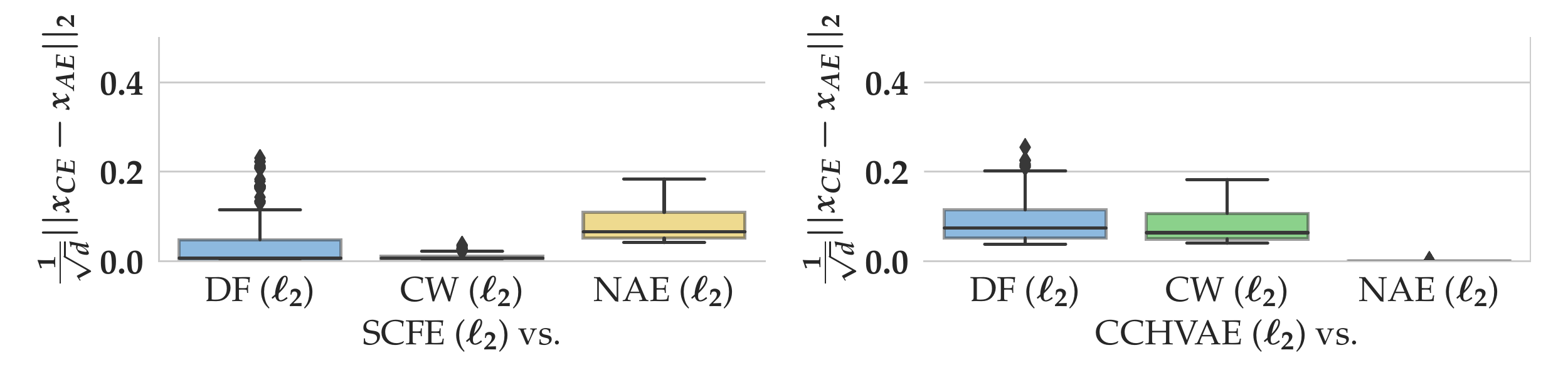}
         \caption{COMPAS}
         \label{fig:mad_compas_ann_1n}
\end{subfigure}
\vfill
\begin{subfigure}[b]{\textwidth}
         \centering
         \includegraphics[scale=0.5]{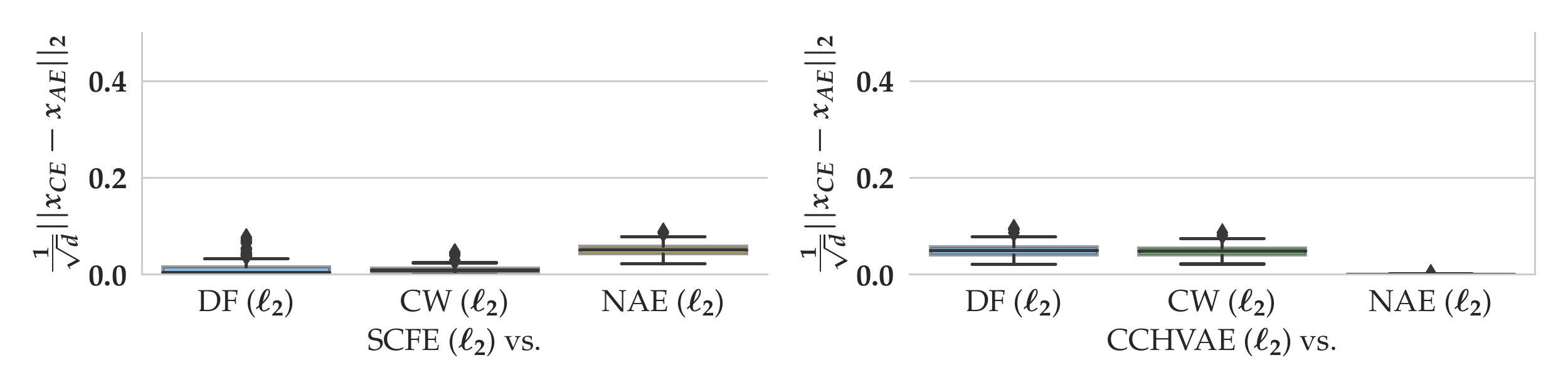}
         \caption{German Credit}
         \label{fig:mad_credit_ann_1n}
\end{subfigure}
\caption{Distribution of instance wise norm comparisons for the 2-layer ANN. We show the distribution of cost comparisons across negatively predicted instances ($\hat{y} = 0$) for which we computed \advexs and \counterexps.}
\label{fig:l1_all_ann}
\end{figure}

\end{document}